\documentclass[10pt,journal,compsoc]{IEEEtran}

\usepackage[nocompress]{cite}
\usepackage{url}
\usepackage{multirow}
\usepackage{graphicx}
\usepackage{epstopdf}
\usepackage{booktabs}
\usepackage{amstext}
\usepackage{amsthm}
\usepackage{amsmath}
\theoremstyle{plain}

\newtheorem{theorem}{Theorem}
\newtheorem{lemma}{Lemma}

\theoremstyle{definition}
\newtheorem*{prf}{Proof}

\usepackage{enumitem}
\usepackage{amssymb}
\usepackage{algorithm}
\usepackage{algpseudocode}

\usepackage{makecell}
\usepackage{array}
\newcommand{\PreserveBackslash}[1]{\let\temp=\\#1\let\\=\temp}
\newcolumntype{C}[1]{>{\PreserveBackslash\centering}p{#1}}
\newcolumntype{R}[1]{>{\PreserveBackslash\raggedleft}p{#1}}
\newcolumntype{L}[1]{>{\PreserveBackslash\raggedright}p{#1}}
\usepackage{color}
\usepackage{subfig}

\begin{document}

\title{Learnable Weighting of Intra-Attribute Distances for Categorical Data Clustering with \\ Nominal and Ordinal Attributes}

\author{Yiqun~Zhang,~\IEEEmembership{Member,~IEEE}
        and~Yiu-ming~Cheung,~\IEEEmembership{Fellow,~IEEE}

}
\markboth{IEEE TRANSACTIONS ON PATTERN ANALYSIS AND MACHINE INTELLIGENCE, 2022}%
{Zhang \MakeLowercase{\textit{et al.}}: Bare Demo of IEEEtran.cls for Computer Society Journals}

\IEEEtitleabstractindextext{%
\begin{abstract}
The success of categorical data clustering generally much relies on the distance metric that measures the dissimilarity degree between two objects. However, most of the existing clustering methods treat the two categorical subtypes, i.e. nominal and ordinal attributes, in the same way when calculating the dissimilarity without considering the relative order information of the ordinal values. Moreover, there would exist interdependence among the nominal and ordinal attributes, which is worth exploring for indicating the dissimilarity. This paper will therefore study the intrinsic difference and connection of nominal and ordinal attribute values from a perspective akin to the graph. Accordingly, we propose a novel distance metric to measure the intra-attribute distances of nominal and ordinal attributes in a unified way, meanwhile preserving the order relationship among ordinal values. Subsequently, we propose a new clustering algorithm to make the learning of intra-attribute distance weights and partitions of data objects into a single learning paradigm rather than two separate steps, whereby circumventing a suboptimal solution. Experiments show the efficacy of the proposed algorithm in comparison with the existing counterparts.
\end{abstract}

\begin{IEEEkeywords}
Categorical data clustering, nominal-and-ordinal attribute, intra-attribute distance, learnable weighting.
\end{IEEEkeywords}}

\maketitle

\IEEEdisplaynontitleabstractindextext

\IEEEpeerreviewmaketitle

\IEEEraisesectionheading{\section{Introduction}\label{sct:introduction}}

\IEEEPARstart{W}{idespread} categorical data can be easily collected from questionnaires, medical scales, scoring systems, and so on \cite{intro1}. As one of the most widely used machine learning and pattern recognition techniques, clustering that partitions data objects into homogeneous groups in unsupervised environment \cite{kms,over11} has been commonly adopted for the analysis of categorical data \cite{intro5,intro11}. In order to better discover homogeneous clusters, weighting attributes according to their importance to the clustering task \cite{over1} is adopted by many existing clustering algorithms \cite{wkm,wkm2,ewkm,mwkm,woc}. Since weighting an attribute is equivalent to uniformly weighting all the intra-attribute distances measured on this attribute, these algorithms are actually based on the hypothesis that all the intra-attribute distances are well defined, which is reasonable for numerical data with well-defined distance measure \cite{over10}. However, for categorical data whose distance measure is generally not well-defined, uniformly weighting the intra-attribute distances is surely unreasonable \cite{intro10}. To solve this problem, most existing methods focus on exploring appropriate distance measures \cite{intro6,intro7} and attribute weighting mechanisms \cite{woc}.

Successful attempts in exploring appropriate distance measures include Lin's \cite{lsm} similarity measure, coupled \cite{r1compare2} similarity metric, association-based \cite{abdm}, Ahmad's \cite{adm}, context-based \cite{cbdmconf,cbdmjournal}, and Jia's \cite{jdm} distance metrics. The above-mentioned measures define intra-attribute distances according to the possible value statistics, e.g., the occurrence frequencies and conditional occurrence probabilities. Lin's measure computes the cumulative entropy of a range of ordered possible values (i.e., the adjacent possible values \{good, neutral, bad\} of an ordinal attribute with possible values \{very-good, good, neutral, bad, very-bad\}) to indicate the corresponding intra-attribute distance (i.e., the distance between good and bad) with preserving the order relationship, which is suitable for the distance measurement of ordinal data. The others define intra-attribute distances according to the context information reflected by conditional probability distributions between interdependent attributes, which works well for nominal data. In recent years, more powerful representation-based methods including structure-based \cite{r1compare1}, coupled \cite{r1cluster2,r1compare3}, and heterogeneous coupling \cite{r1compare4} representations, have been proposed to represent categorical data by embedding more informative and complex relationships existing in the level of values, attributes, and objects, so as to achieve a more reasonable distance measurement. Unfortunately, they still work well for nominal data only.

In summary, all the above mentioned measures are proposed without considering a very common situation that real categorical data are usually composed of a mixture of nominal and ordinal attributes \cite{intro2,intro3}. As the fragment of medical scale data set shown in Table~\ref{tb:example_mix_data}, the values of ordinal Attribute 1 stand for the degrees of lymph enlargement, the values of nominal Attribute 2 stand for the special form of lymph, and the values of the Class attribute indicate the diagnosis results, which are the desired true cluster labels in cluster analysis.
\begin{table}[t]
\caption{Fragment of Lymphography data set.}
\label{tb:example_mix_data}
\centering
\begin{tabular}{c|cc|c}
\toprule
\multirow{2}{*}{No.}      & Attribute 1       & Attribute 2   & Class\\
&(enlarge) & (form)     & (diagnosis) \\
\midrule
1        & $\uparrow$     & non-special  & normal    \\
2        & $\uparrow$     & vesicles     & fibrosis    \\
3        & $\uparrow\uparrow$     & vesicles     & fibrosis    \\
4        & $\uparrow\uparrow\uparrow$     & chalices     & metastases   \\
5        & $\uparrow\uparrow\uparrow$     & chalices     & malign       \\
6        & $\uparrow\uparrow\uparrow\uparrow$     & vesicles     & malign    \\
\bottomrule
\end{tabular}
\end{table}
In medicine, it is generally believed that the severity of fibrosis, metastasis, and malign increases in sequence. Apparently, if we treat the ordered values of Attribute 1 as nominal values, information provided by the monotonic relationship between the values of Attribute 1 and the true class labels will be lost \cite{ex6}, which will directly affect the clustering accuracy. Moreover, there also exists an awkward gap between the cluster information provided by nominal and ordinal attributes, because the values of an ordinal attribute contain the relative ordering information, but the values of a nominal one do not. Hence, to avoid the loss of important information, entropy-based distance metrics \cite{ebdmjournal,udm} have been proposed to quantify intra-attribute distances of nominal and ordinal attributes as information entropy \cite{over3} in a unified way. However, they have not established an essential connection between nominal and ordinal attributes for data clustering.

As for attribute weighting mechanism, most efforts have tried to weight attributes for each cluster, which is called subspace clustering. Typical subspace approaches include \cite{wkm,wkm2,mwkm,woc}, which learn the different weight combinations of attributes for each cluster to explore more appropriate subspaces for gathering homogeneous data objects. Nevertheless, they uniformly weight all intra-attribute distances measured on the same attribute, which still makes these approaches incompetent in adapting the contributions of different intra-attribute distances to search for more appropriate clustering results. Most recently, a distance weighting-based clustering algorithm \cite{dlc} has been proposed to learn the weights of intra-attribute distances automatically during clustering. This algorithm has remarkable performance on ordinal data sets, but it relies on the order relationship among attribute values for learning the distance weights, which makes it applicable to ordinal data only. To the best of our knowledge, clustering algorithm that can learn the weights of intra-attribute distances for categorical data with nominal and ordinal attributes has yet to be proposed.

In this paper, we will propose a new clustering method composed of a novel distance definition and an automatic distance weighting mechanism for any-type categorical data clustering, i.e., clustering data composed of any combination of nominal and ordinal attributes. Specifically, we study the intrinsic difference and connection of nominal and ordinal attributes, and convert each possible value of nominal attributes, e.g., ``vesicles'' of Attribute 2 as shown in Table~\ref{tb:example_mix_data}, into a Boolean attribute with two possible values ``vesicles'' and ``not vesicles''. Such Boolean attribute is a special case of ordinal attribute, i.e., an ordinal attribute with two extreme degrees ``vesicles'' and ``not vesicles''. Thus, the heterogeneous clustering information provided by nominal and ordinal attributes becomes homogeneous information provided by ordinal attributes. On this basis, the information provided by interdependent attributes in three cases (i.e., (i) both attributes are nominal, (ii) both attributes are ordinal, and (iii) one is nominal and the other is ordinal) is utilized to measure intra-attribute distances of nominal and ordinal attributes in a unified way. Since the defined distances are not connected to a certain clustering task, we also propose a novel intra-attribute distance weighting mechanism to learn the distance weights iteratively based on the present data partition result to search for better clustering results. The proposed distance definition and weighting mechanism are complementary to each other in clustering. It turns out that the clustering algorithm utilizing them is competent for the cluster analysis of any-type categorical data. The main contributions of this paper are summarized below:
\begin{itemize}
\item Inherent connection of nominal and ordinal attributes is studied, and a novel measure suitable for intra-attribute distance measurement of any-type categorical data clustering is proposed accordingly.
\item An intra-attribute distance weighting mechanism that iteratively updates the distance weights to search for better data partitions, if any, is designed to make the measured intra-attribute distances learnable.
\item A new categorical data clustering algorithm is presented by utilizing the learnable distance measure. This algorithm is parameter free and has superior clustering performance on any-type categorical data.
\end{itemize}

The remainder of this paper is organized as follows. Section~\ref{sct:related} reviews the related works. Section~\ref{sct:preliminary} formulates the research problems. A design of homogeneous distance metric is proposed in Section~\ref{sct:dist}. Then, Section~\ref{sct:clustering} introduces a new clustering algorithm with the novel distance weighting mechanism as the core. Experimental results are given in Section~\ref{sct:experiments}. Finally, we draw a conclusion in Section~\ref{sct:conclusion}.

\section{Related Work}\label{sct:related}

This section makes an overview of the existing related works on categorical data clustering.

\subsection{Distance Measure}\label{subsct:related_measure}

The distance measures for categorical data clustering can be generally categorized as the direct, context-based, and representation-based ones. The simplest direct measure \cite{hdm} directly assigns distances 0 and 1 to identical and different intra-attribute values, respectively. The other direct measures \cite{lsm,ebdmconf,ebdmjournal} compute the intra-attribute distance between two possible values according to their occurrence frequencies. Direct measures are easy to use and have demonstrated great computational efficiency because their computation does not involve parameter selection, context information extraction, iterative learning, etc. However, since the valuable information provided by the correlated attributes is totally ignored, intra-attribute distances defined by them are not always reasonable in indicating the real dissimilarity degrees.

In contrast, the context-based measures \cite{r1compare2,abdm,adm,cbdmconf,cbdmjournal,jdm,udm} compute the distance between two intra-attribute values based on the context information, i.e., the statistical information provided by the other attributes that are correlated with the target one. In general, these measures outperform the direct ones, but their performance dependents more on the interdependence of attributes. For the data composed of independent attributes, some indirect measures \cite{abdm,adm,cbdmconf,cbdmjournal} that are based on the sole information provided by the interdependent attributes would even fail for distance measurement. Among all the above-mentioned indirect and context-based measures, the two measures \cite{ebdmjournal,udm} that unify the distance concept of nominal and ordinal attributes as the information divergence to avoid information loss are suitable for any-type categorical data clustering. Nevertheless, they only provide scale-level distance unification, but have yet to consider the intrinsic connection between nominal and ordinal attributes.

The representation-based distance measures encode categorical values into numerical ones, and then the advanced distance measures and clustering algorithms proposed for numerical data can be utilized. In many practical application scenarios, the encoding is performed by domain experts, which makes the performance sensitive to the prior knowledge. Further, for large-scale, high-dimensional, and multi-variate categorical data, the encoding process is a laborious and non-trivial task. A commonly adopted way to circumvent these issues is to simply encode each possible value of nominal attributes into a binary-valued numerical attribute and the ordered possible values of each ordinal attribute into consecutive integers, which is called simple coding. It turns out that simple coding is applicable to any-type categorical data. Nevertheless, since it ignores the original statistical information of possible values, and it assigns the identical distance to different possible value pairs, empirical studies in \cite{dlc} have shown that its performance is generally worse than the measures specially designed for categorical data. Recently, representation learning methods \cite{r1compare1,r1compare3,r1compare4} have been proposed for automatically encoding categorical data in unsupervised environment. The one called SBC \cite{r1compare1} reconstructs the original data set according to the inter-object dissimilarities. CDE in \cite{r1compare3} encodes the original data set by performing $k$-means clustering and PCA on intra- and inter-attribute couplings. The newly proposed UNTIE \cite{r1compare4} represents data set by using more types of couplings learned in multiple kernel spaces, and achieves superior clustering performance. However, all the above-mentioned representation learning methods are actually designed for nominal data only, and their performance somewhat depends on the non-trivial selection of parameters or kernel functions.

\subsection{Clustering Algorithm}\label{subsct:related_algorithm}

From the perspective of attribute weighting, the existing categorical data clustering algorithms can be roughly categorized as the non-attribute-weighting and attribute-weighting ones, respectively. As a non-attribute-weighting algorithm, the conventional $k$-modes \cite{kmd} adopts Hamming distance \cite{hdm} as a distance measure to compute the distance between data objects and the $k$ modes. Based on the object-mode distances, it iteratively searches for better partitions of data set. Furthermore, some of its variants also focus on improving its robustness and scalability \cite{kpt}\cite{oc}. In addition, clustering algorithm adopting entropy as a measure \cite{ecc} has been proposed in the literature. It computes the entropy value of the present partition after moving an object into a cluster, and performs cluster analysis by searching for the partition with the minimum entropy value \cite{over4}. In general, all the above-mentioned algorithms assume that the attributes are of identical importance for clustering tasks, which is, however, not always true in practice.

In the literature, an attribute weighting-based categorical data clustering algorithm \cite{wkm} has been proposed provided that the attributes are of different importance. It assigns different weights to the attributes according to their contributions in forming more compact clusters. That is, if the total distance between data objects and their clusters measured on a certain attribute is low, it indicates that this attribute contributes more than the others in forming the clusters with similar objects. Subsequently, a higher weight is thus assigned to this attribute in the next iteration to search for more compact clusters. Nevertheless, this weighting mechanism
finds only a certain attributes' subset that is important to a certain subset of clusters, which is evidently incompetent in a more complex case. Therefore, subspace clustering algorithms \cite{wkm2,ecc,mwkm,woc} that weight each attribute according to its contribution in forming each certain cluster have been proposed.

In general, weighting an attribute is equivalent to uniformly weighting all the distances measured on it. Thus, all the above-mentioned attribute weighting-based algorithms actually assume that the distance measure can accurately indicate the intra-attribute distances. If the adopted distance measure is not appropriately defined, uniformly weighting the intra-attribute distances measured by them will just bring more irrationality into the clustering process. Therefore, the most recently proposed clustering algorithm \cite{dlc} addresses this issue by iteratively weighting the importance of intra-attribute distances according to the present partition to search for more appropriate clustering results of the data set. Unfortunately, distance weighting of this algorithm relies on the order relationship among intra-attribute values, which makes it only applicable to the categorical data sets composed of ordinal attributes.

\section{Problem Statement}\label{sct:preliminary}

We formulate the problem of distance weighting-based clustering of categorical data in this section. Table~\ref{tb:notation} lists the styles of notations used in this paper.
\begin{table}[t]
\caption{Style of notations and explanation of symbols.}
\label{tb:notation}
\centering
\begin{tabular}{ll}
\toprule
Notation (example) & Style \\
\midrule
Attribute index (e.g. $A^r$) & Superscript \\
Value note (e.g. $d^{(\text{ord})}$) & Superscript with parentheses \\
Function (e.g. $\text{dist}(\cdot,\cdot)$) & Parentheses \\
Space (e.g. $\mathcal{R}_0^+$) & Uppercase, calligraphic font \\
Vector (e.g. $\textbf{p}_l^r$) & Lowercase, bold font \\
Matrix (e.g. $\textbf{Q}$) & Uppercase, bold font \\
\bottomrule
\toprule
Symbol (example) & Explanation of example \\
\midrule
$\emptyset$ (e.g. $A^\text{(ord)}=\emptyset$) & $A^\text{(ord)}$ is an empty set \\
$\top$ (e.g. $[x_i^1,x_i^2,...,x_i^d]^\top$) & Transpose of $[x_i^1,x_i^2,...,x_i^d]$ \\
$\succ$ (e.g. $o^r_1\succ o^r_2$) & $o^r_1$ ranks higher than $o^r_2$ \\
$\neg$ (e.g. $\neg o^s_g$) & $A^s$'s possible values excluding $o^s_g$ \\
\bottomrule
\end{tabular}
\end{table}
A categorical data set $S$ can be represented as a tuple $S=<X,A,O>$, where $X=\{\textbf{x}_i|i\in N_X\}$ is the object set with $n$ elements, and $N_X=\{1,2,...,n\}$ is the index set of $X$. For attribute set $A$ composed of $d$ attributes, we assume that the former $d^{(\text{ord})}$ attributes are ordinal and the latter $d^{(\text{nom})}$ attributes are nominal for convenience without loss of generality, and we have $d^{(\text{ord})}+d^{(\text{nom})}=d$. Formally, $A^{(\text{ord})}=\{A^r|r\in N_A^{(\text{ord})}\}$ is the ordinal attribute set, $A^{(\text{nom})}=\{A^s|s\in N_A^{(\text{nom})}\}$ is the nominal attribute set, $N_A^{(\text{ord})}=\{1,2,...,d^{(\text{ord})}\}$ and $N_A^{(\text{nom})}=\{d^{(\text{ord})}+1,d^{(\text{ord})}+2,...,d\}$ are the index sets of $A^{(\text{ord})}$ and $A^{(\text{nom})}$, respectively. $A=A^{(\text{ord})}\cup A^{(\text{nom})}$ is the complete attribute set, and $N_A=N_A^{(\text{ord})}\cup N_A^{(\text{nom})}$ is the complete index set of $A$. Accordingly, three types of categorical data can be distinguished by:
\begin{equation}\label{eq:data_type}
\resizebox{0.91\linewidth}{!}{$
\text{datatype}(S)=\left\{
\begin{array}{lll}
\text{mixed},  & A^{(\text{ord})}\neq\emptyset,\ \ A^{(\text{nom})}\neq\emptyset\\
\text{ordinal},& A^{(\text{ord})}\neq\emptyset,\ \ A^{(\text{nom})}=\emptyset\\
\text{nominal},& A^{(\text{ord})}=\emptyset,\ \ A^{(\text{nom})}\neq\emptyset.
\end{array}
\right.
$}
\end{equation}
Hereinafter, a categorical data set composed of a mixture of ordinal and nominal attributes, pure ordinal attributes, and pure nominal attributes is called mixed, ordinal, and nominal data set, respectively. $O^r=\{o_m^r|m\in N_O^r\}$ is the set of $v^r$ possible values of attribute $A^r$, and $N_O^r=\{1,2,...,v^r\}$ is the index set of $A^r$'s possible values. The $i$th object of $X$ is represented as $\textbf{x}_i=[x_i^1,x_i^2,...,x_i^d]^\top$ with $x_i^r\in O^r$, $r\in N_A$. If $A^r$ is an ordinal attribute (i.e. $r\leq d^{(\text{ord})}$), its possible values satisfy $o_1^r\succ o_2^r\succ ... \succ o_{v^r}^r$ where the symbol ``$\succ$'' indicates that the values on its left are rank higher than the values on its right.

In crisp partitional clustering task, $X$ is partitioned into $k$ clusters, which can be represented as a cluster set $C=\{C_l|l\in N_C\}$ with $N_C=\{1,2,...,k\}$. Accordingly, $X$ can be represented as a collection of $k$ disjoint subsets $X=\bigcup_{l=1}^k X_{C_l}$ where $X_{C_l}$ is the object set corresponding to the $l$th cluster. The $k$ clusters are represented by their corresponding statistical information $P=\{P_l|l\in N_C\}$ where $P_l=\{\textbf{p}_l^r|r\in N_A\}$ is the statistical information of $C_l$ and $\textbf{p}_l^r=[p_{l1}^r,p_{l2}^r,...,p_{lv^r}^r]^\top$ is the probability distribution of the $r$th values of the objects in $C_l$. Values of $P$ are dependent on $\textbf{Q}$, which is an $n\times k$ matrix indicating the partition of $X$. The $(i,l)$th entry of $\textbf{Q}$ is denoted as $q_{il}$. If $\textbf{x}_i$ belongs to $C_l$, we have $q_{il}=1$, otherwise, $q_{il}=0$. To learn the importance of intra-attribute distances, we solve the clustering problem in a distance weighting framework. The weights of intra-attribute distances are denoted as a set of matrices $W=\{\textbf{W}^r|r\in N_A\}$ where $\textbf{W}^r$ is a $v^r\times v^r$ symmetric matrix storing the weights of intra-attribute distances of $A^r$. The $(m,h)$th entry of $\textbf{W}^r$ is denoted as $w_{mh}^r$, which represents the weight of the distance between possible values $o_m^r$ and $o_h^r$. The clustering problem can be formulated as minimizing the objective function:
\begin{equation}\label{eq:objective}
Z(\textbf{Q},P,W)=\sum_{i=1}^n\sum_{l=1}^kq_{il}\text{dist}(\textbf{x}_i,C_l)
\end{equation}
$$
s.t.\
\left\{
\begin{array}{ll}
\sum_{l=1}^kq_{il}=1,\ \ q_{il}\in\{0,1\},& i\in N_X,\\
\sum_{r=1}^d\sum_{m=1}^{v^r-1}\sum_{h=m+1}^{v^r}w^r_{mh}=1,& w_{mh}^r\in \mathcal{R}^+_0.\\
\end{array}
\right.
$$

The object-cluster distance $\text{dist}(\textbf{x}_i,C_l)$ is defined as
\begin{equation}\label{eq:dist_oc}
\text{dist}(\textbf{x}_i,C_l)=\sum_{r=1}^d\text{dist}^r(\textbf{x}_i,C_l),
\end{equation}
and $\text{dist}^r(\textbf{x}_i,C_l)$ is the object-cluster distance measured on attribute $A^r$. If $x^r_i=o^r_m$, $\text{dist}^r(\textbf{x}_i,C_l)$ can be written as
\begin{equation}\label{eq:dist_oca}
\text{dist}^r(\textbf{x}_i,C_l)=\sum_{h=1}^{v^r}w^r_{mh}\text{dist}^r(o^r_m,o^r_h)p^r_{lh},
\end{equation}
and the intra-attribute distance $\text{dist}^r(o^r_m,o^r_h)$ is defined as
\begin{equation}\label{eq:dist_basic}
\text{dist}^r(o^r_m,o^r_h)=\frac{1}{d}\sum_{s=1}^d\text{dist}^{rs}(o^r_m,o^r_h)
\end{equation}
where the superscript ``$rs$'' of $\text{dist}^{rs}(o^r_m,o^r_h)$ indicates that this is the intra-attribute distance between $A^r$'s possible values with respect to $A^s$. We define distance in the form of Eq.~(\ref{eq:dist_basic}) in order to exploit context information provided by interdependent attributes for distance measurement as most categorical data distance measures do \cite{abdm,adm,cbdmjournal,jdm,r1compare2,udm}. The exact definition of $\text{dist}^{rs}(o^r_m,o^r_h)$ will be given in Section~\ref{subsct:dist}.

Similar to most existing $k$-modes-type algorithms, the minimization problem of Eq.~(\ref{eq:objective}) can be solved by iteratively computing one variable and fixing the others. Since the values of $P$ are completely dependent on the values of $\textbf{Q}$, we can iteratively solve the following two problems:

\begin{itemize}
\item\textbf{P. 1}: Fix $W=\hat{W}$ and $P=\hat{P}$, solve the reduced problem $Z(\textbf{Q},\hat{P},\hat{W})$, update $P$ according to $\textbf{Q}$;
\item\textbf{P. 2}: Fix $\textbf{Q}=\hat{\textbf{Q}}$ and $P=\hat{P}$, solve the reduced problem $Z(\hat{\textbf{Q}},\hat{P},W)$.
\end{itemize}

\section{Homogeneous Distance Measurement}\label{sct:dist}

For cluster analysis, the adopted distance measure usually dominates clustering performance. In this section, we study the differences and commonalities of ordinal and nominal attributes, and then propose a homogeneous intra-attribute distance definition for them.

\subsection{Attribute Structure}\label{subsct:structure}

\begin{figure}[t]
\newcommand{\mylwd}{1}
\newcommand{\mydmwd}{3in}
  \centerline{\includegraphics[width=2.6in]{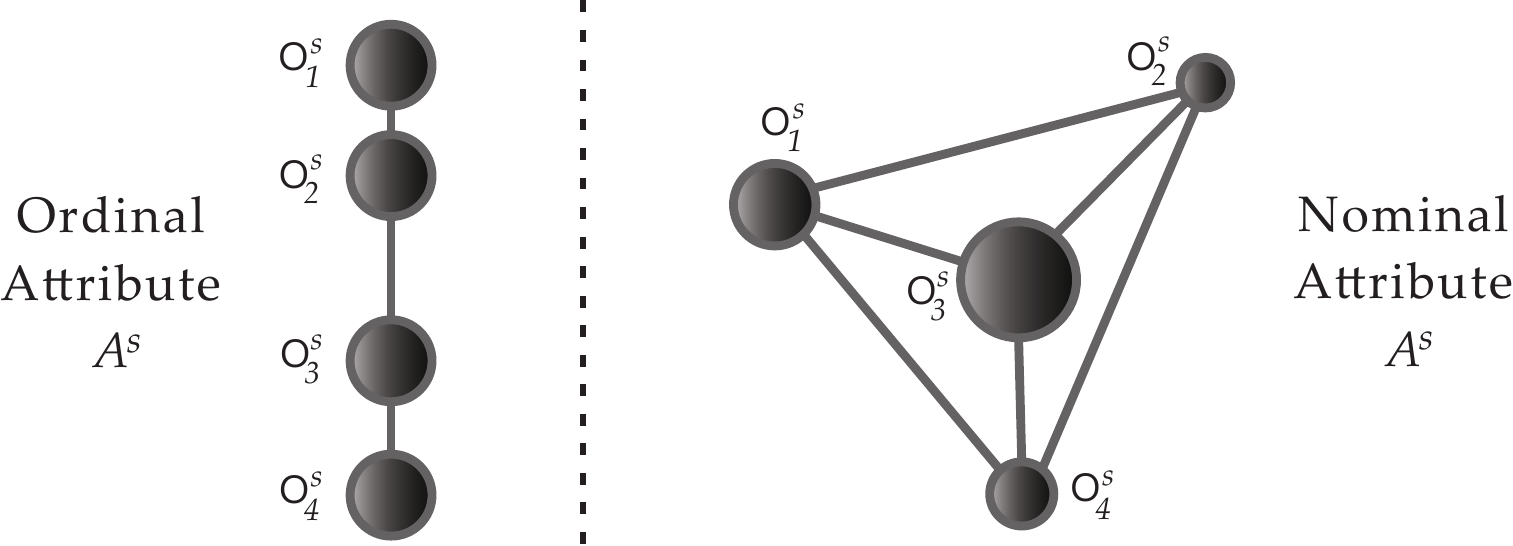}}
\caption{Structural difference between ordinal and nominal attributes from the perspective of graph. The black nodes stand for possible values and the edges reflect the spatial relationships among possible values.}
\label{fig:diff}
\end{figure}

We firstly discuss the difference between ordinal and nominal attributes. As shown in Fig.~\ref{fig:diff}, if we treat the intra-attribute possible values as nodes connected by edges, since nodes of an ordinal attribute are naturally ordered, one node cannot be reached along the edges from another non-adjacent node without crossing its adjacent node, while for a nominal attribute, a node can be directly reached along an edge from any node without involving such ``crossing''. We construct graphs for studying the heterogeneity between ordinal and nominal attributes because graph is effective in modeling complex relationships between nodes \cite{r1survey2,r1survey1}, and has been successfully applied to different machine learning tasks, such as sketch synthesis \cite{r2survey1}, item recommendation \cite{r1bandit1}, object retrieval \cite{r2survey2}, etc. It can be seen according to Fig.~\ref{fig:diff} that the structure of ordinal attribute is line-like while the structure of nominal attribute is net-like. These structures are consistent with the relationships among intra-attribute possible values of ordinal and nominal attributes from the practical point of view. For example, if we compare two choices, i.e., bad and very-good, of the review result regarding the novelty of a manuscript with the five choices $\{\text{very-good},\text{good},\text{neural},\text{bad},\text{very-bad}\}$. We will not skip neutral and good to directly compare bad and very-good, because all the choices are clearly ordered. In contrast, if we compare two choices that belong to a choice set without such order relationship, we will directly compare the two choices without involving the other choices. It is obvious that the structures of ordinal and nominal attributes are heterogeneous, which makes their intra-attribute distances difficult to be defined in a homogeneous way.

\subsection{Homogeneous Learning}\label{subsct:homo}

\begin{figure}[t]
\newcommand{\mylwd}{1}
\newcommand{\mydmwd}{3in}
  \centerline{\includegraphics[width=2.6in]{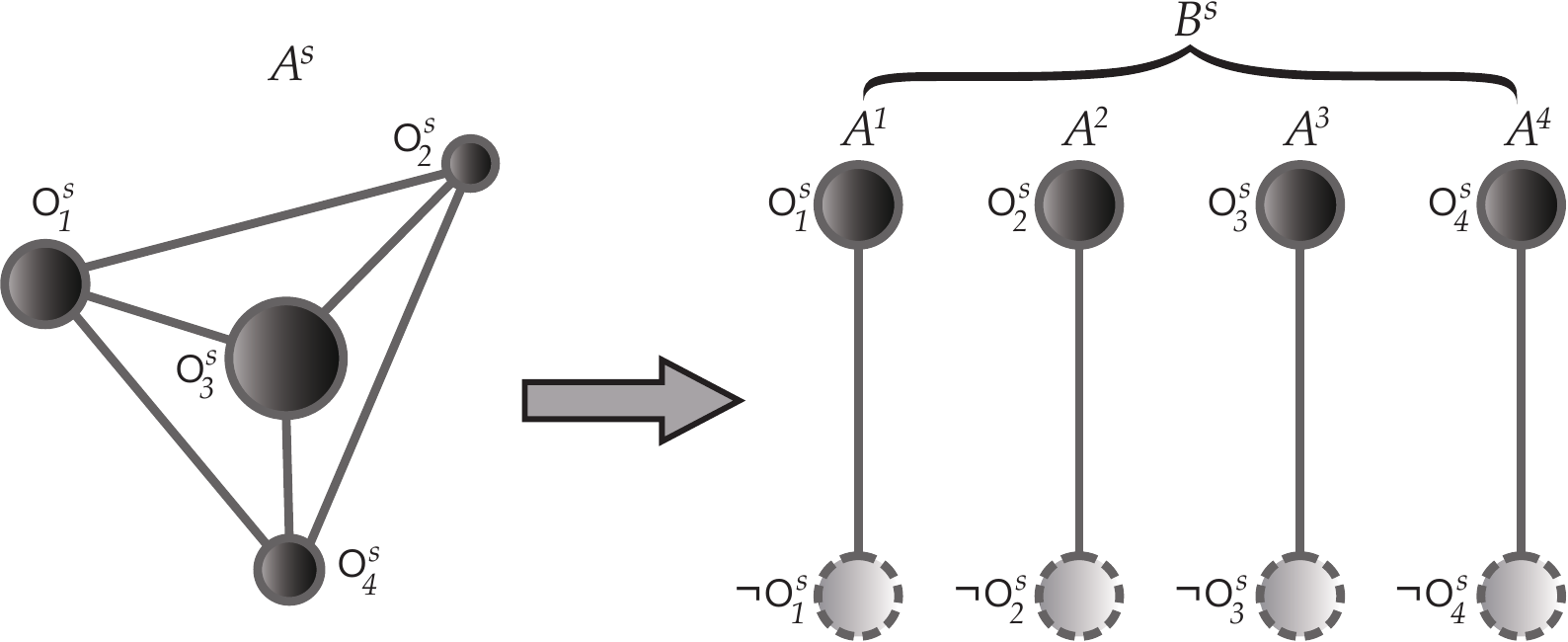}}
\caption{Converting a nominal attribute $A^s$ into a set of ordinal attributes $B^s$: Each nominal possible value $o^s_g$ is converted into an ordinal attribute $A^g$ with two ordered possible values $o^s_g$ and $\neg o^s_g$.}
\label{fig:convert}
\end{figure}

For mixed categorical data, there are two cases for Eq.~(\ref{eq:dist_basic}): 1) $A^s\in A^{(\text{ord})}$, and 2) $A^s\in A^{(\text{nom})}$. Since the possible values of an ordinal attribute represent the different degrees of a concept while the possible values of a nominal attribute represent different concepts, we convert possible values of a nominal attribute into ordinal attributes as shown in Fig.~\ref{fig:convert} so that the original nominal attribute becomes homogeneous with ordinal attributes. Specifically, for $A^s\in A^{(\text{nom})}$ with $v^s$ possible values, we convert it into a set of $v^s$ ordinal attributes
\begin{equation}\label{eq:expand}
B^s=\{A^g|g\in N_O^s\}
\end{equation}
where $A^g$ is a newly generated ordinal attribute corresponding to the possible value $o_g^s$ of $A^s$. Each $A^g$ has two possible values $o_1^g=o_g^s$ and $o_2^g=\neg o_g^s$ where $v^g=2$ and $o_1^g\succ o_2^g$. Here, $\neg o_g^s$ stands for all the possible values of $A^s$ except $o_g^s$. Each $A^g$ can be viewed as a special case of ordinal attribute, in which there are only two possible values indicating two extreme degrees, i.e., ``is $o_g^s$'' and ``is not $o_g^s$''. In this way, all the nominal attributes can be converted into ordinal attributes, and the intra-attribute distances can then be measured according to the same type of information provided by the attributes.

\subsection{Design of Proposed Distance Metric}\label{subsct:dist}

The distance between two possible values (e.g., ${o}^r_m$ and ${o}^r_h$ of attribute $A^r$) with respect to another attribute (e.g. $A^s$) is defined in this part. Before presenting the details of this distance definition, let us first define the conditional probability distribution of an attribute (e.g., $A^s$) with respect to a possible value (e.g., $o_m^r$), which can be written as
\begin{equation}\label{eq:distribution_ord}
\textbf{u}_m^{rs}=[p(o_1^s|o_m^r),p(o_2^s|o_m^r),...,p(o_{v^s}^s|o_m^r)]^\top
\end{equation}
where $p(o_g^s|o_m^r)$ is the conditional probability of $ {o}_g^s$ with respect to $ {o}_m^r$ following Bayes' theorem:
\begin{equation}\label{eq:cond_prob_ord}
p( {o}_g^s| {o}_m^r)=\frac{\text{card}(X_g^s\cap X_m^r)}{\text{card}(X_m^r)}.
\end{equation}
Here, $X_g^s=\{\textbf{x}_i|x^s_i=o^s_g, i\in N_X\}$ is a subset of $X$ with the $s$th values of all its objects equal to ${o}^s_g$, and the function $\text{card}(\cdot)$ counts the cardinality of a set. Then, we define the distance between two possible values (e.g., ${o}^r_m$ and ${o}^r_h$ of attribute $A^r$) with respect to another attribute (e.g. $A^s$) as follows:
\begin{equation}\label{eq:measure}
\text{dist}^{rs}( {o}^r_m, {o}^r_h)=\left\{
\begin{array}{ll}
\psi(\textbf{u}^{rs}_m,\textbf{u}^{rs}_h), &  A^s\in A^{(\text{ord})}\\
\dfrac{1}{v^s}\sum_{g=1}^{v^s}\psi(\textbf{u}^{rg}_{m},\textbf{u}^{rg}_{h}), &  A^s\in A^{(\text{nom})},\\
\end{array}
\right.
\end{equation}
where $\psi(\cdot,\cdot)$ computes the distance between two probability distributions. For the nominal case (i.e., $A^s\in A^{(\text{nom})}$), the distance with respect to $A^s$ is computed as the mean of the distances with respect to the ordinal attributes $A^g\in B^s$ that are converted from a nominal attribute $A^s$ as shown in Fig.~\ref{fig:convert}. See Eq.~(\ref{eq:expand}) and corresponding discussions in Section~\ref{subsct:homo} for more details. As both $A^s$ in the ordinal case and $A^g$ in the nominal case are ordinal attributes, we only need to discuss how to define $\psi(\cdot,\cdot)$ in the ordinal case.

In the literature, although the distance between two probability distributions is commonly computed in the form of $l_1$-norm (i.e., $||\textbf{u}^{rs}_m-\textbf{u}^{rs}_h||_1$) or $l_2$-norm (i.e., $||\textbf{u}^{rs}_m-\textbf{u}^{rs}_h||_2$), they are not suitable here because they cannot preserve order relationship among possible values of an ordinal attribute. For example, given $\textbf{u}_1=[1,0,0,0]^\top$, $\textbf{u}_2=[0,1,0,0]^\top$, $\textbf{u}_3=[0,0,0,1]^\top$, we have $||\textbf{u}_1-\textbf{u}_2||_1=||\textbf{u}_1-\textbf{u}_3||_1$. However, if $\textbf{u}_1$, $\textbf{u}_2$, $\textbf{u}_3$ are obtained from an ordinal attribute, it is obvious that $\textbf{u}_1$ and $\textbf{u}_2$ are more similar than $\textbf{u}_1$ and $\textbf{u}_3$, because the two possible values that rank 1st and 2nd are more similar than the two possible values that rank 1st and 4th. To preserve the order relationship, we define $\psi(\cdot,\cdot)$ as the cost of transforming a probability distribution into another according to the structure of ordinal attribute shown in Fig.~\ref{fig:diff}, and  $\psi(\textbf{u}^{rs}_m,\textbf{u}^{rs}_h)$ can be written as
\begin{equation}\label{eq:signature_dist}
\psi(\textbf{u}^{rs}_m,\textbf{u}^{rs}_h)=\frac{\sum_{t=1}^{v^s-1}|\sum_{g=1}^{t}\left(p( {o}_g^s| {o}_m^r)-p( {o}_g^s| {o}_h^r)\right)|}{v^s-1}.
\end{equation}
The distance defined in Eq.~(\ref{eq:signature_dist}) computes the minimum moving cost for transforming $\textbf{u}^{rs}_m$ into $\textbf{u}^{rs}_h$ (or $\textbf{u}^{rs}_h$ into $\textbf{u}^{rs}_m$), where $|\sum_{g=1}^{t}\left(p( {o}_g^s|o_m^r)-p(o_g^s|o_h^r)\right)|$ in Eq.~(\ref{eq:signature_dist}) is the total `supplies' or `demands' at location $o_t^s$ that should be moved to locations ${o}_{t+1}^s,{o}_{t+2}^s,...,{o}_{v^s}^s$ for offsetting. During the above computation, the `moving distance' between adjacent values is 1 because the prior knowledge we have is that the rank of a possible value is different from its adjacent possible value(s) by 1. A toy example shown in Fig.~\ref{fig:toy_example} intuitively illustrates the computation process.
\begin{figure}[t]
\centering
\includegraphics[width=3.5in]{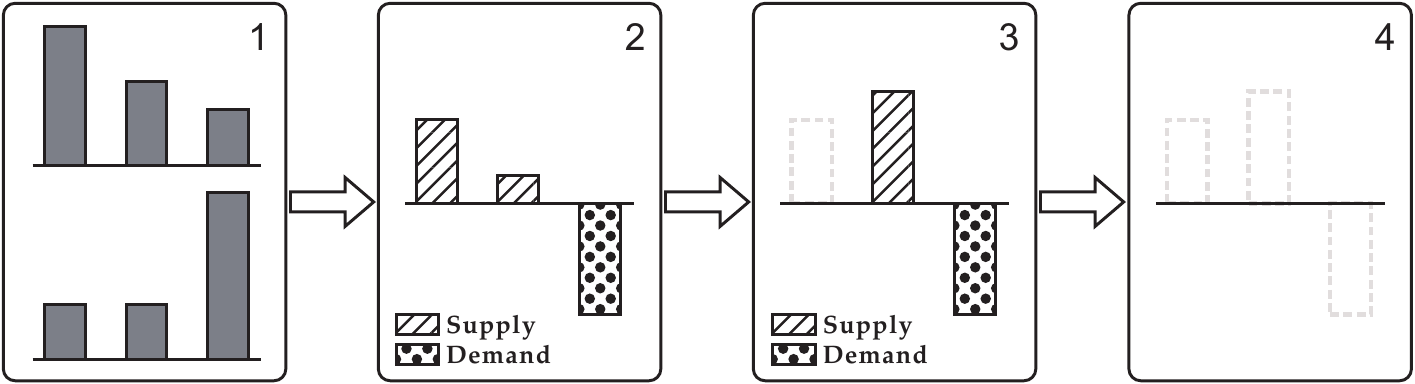}
\caption{Computation process of Eq.~(\ref{eq:signature_dist}). In step 1, we have $\textbf{u}^{rs}_m=[0.5,0.3,0.2]^\top$, i.e. the upper histogram, and $\textbf{u}^{rs}_h=[0.2,0.2,0.6]^\top$, i.e. the lower histogram. To transform $\textbf{u}^{rs}_m$ into $\textbf{u}^{rs}_h$, we first subtract them and obtain the histogram $[0.3,0.1,-0.4]^\top$ in step 2. The slash-filled bins indicate supplies, and the dot-filled bin indicates demand. Then, 0.3 supply at the first place is moved to the second place with 0.1 supply, the moving cost is (0.3$\times$1)/2 = 0.15. In step 3, the total 0.4 supply at the second place is moved to the third place with 0.4 demand, the moving cost is (0.4$\times$1)/2 = 0.2. Since the supply and demand exactly offset each other after step 3, the transforming is completed in step 4, and the total transforming cost is 0.15 + 0.2 = 0.35.}
\label{fig:toy_example}
\end{figure}

Eq.~(\ref{eq:signature_dist}) elaborately reflects the distance between two probability distributions obtained from an ordinal attribute, and we discuss it in detail below:
\begin{itemize}
\item According to our design, `supplies' and `demands' are moved strictly according to the structure of ordinal attribute as shown in Fig.~\ref{fig:diff}. It turns out that the order relationship among the possible values is taken into account in computing the distance between two distributions by Eq.~(\ref{eq:signature_dist}).
\item It is intuitive that two more different distributions yield more `supplies' and `demands' for moving, and thus result in a larger distance computed by Eq.~(\ref{eq:signature_dist}), which is consistent with the general distance definitions like Manhattan and Euclidean distance.
\item In terms of the form, Eq.~(\ref{eq:signature_dist}) can be viewed as a special case of the Earth Movers' Distance (EMD) \cite{emd1,emd2,emd3}, as Eq.~(\ref{eq:signature_dist}) only permits `moving' between adjacent bins of histograms. However, Eq.~(\ref{eq:signature_dist}) is designed under the guidance of the proposed graph structure shown in Fig.~\ref{fig:diff}, which is very different from the motivation and principle for designing EMD.
\end{itemize}

Eq.~(\ref{eq:measure}) and (\ref{eq:signature_dist}) have defined the distance between two possible values with respect to an attribute. Then, according to the structures of nominal and ordinal attributes studied in Section~\ref{subsct:structure}, we define the overall distance between two possible values by combining their distances with respect to each attribute as follows:
\begin{equation}\label{eq:dist_basic2}
\text{dist}^r( {o}^r_m, {o}^r_h)=\left\{
\begin{array}{l}
\dfrac{1}{d}\sum_{s=1}^d\sum_{t=\min(m,h)}^{\max(m,h)-1}\text{dist}^{rs}( {o}^r_t, {o}^r_{t+1})\\
\qquad\qquad\qquad\qquad\qquad\qquad A^r\in A^{(\text{ord})}\\
\dfrac{1}{d}\sum_{s=1}^d\text{dist}^{rs}( {o}^r_m, {o}^r_h),\quad\ \ \  A^r\in A^{(\text{nom})}.\\
\end{array}
\right.
\end{equation}

Based on Eq.~(\ref{eq:dist_basic2}), the distance between two data objects $\textbf{x}_i$ and $\textbf{x}_j$ with their $r$th values denoted as $x_i^r=o^r_m$ and $x^r_j=o^r_h$, respectively, can be written as
\begin{equation}\label{eq:dist_object}
\text{dist}(\textbf{x}_i,\textbf{x}_j)=\frac{1}{d}\sum_{r=1}^d\text{dist}^r(o^r_m,o^r_h).
\end{equation}

\begin{theorem}\label{the:metric}
Distance measure defined in Eq.~(\ref{eq:measure})-(\ref{eq:dist_object}) is a distance metric.
\end{theorem}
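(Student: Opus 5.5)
The plan is to verify the four metric axioms (non-negativity, identity of indiscernibles, symmetry, triangle inequality) bottom-up. I will check them first for $\psi(\cdot,\cdot)$ in Eq.~(\ref{eq:signature_dist}), then for $\text{dist}^{rs}$ in Eq.~(\ref{eq:measure}), then for $\text{dist}^r$ in Eq.~(\ref{eq:dist_basic2}), and finally for $\text{dist}(\textbf{x}_i,\textbf{x}_j)$ in Eq.~(\ref{eq:dist_object}). The steps in between only use non-negative averages and sums, which preserve non-negativity, symmetry and the triangle inequality. So most of the work is at the lowest level, plus the two places where structure matters: strict positivity, and the path-sum for ordinal $A^r$.

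\emph{Step 1 ($\psi$).} For a distribution $\textbf{u}$ over the $v^s$ ordered values, write $F_t(\textbf{u})=\sum_{g=1}^t u_g$ for the cumulative sums. Then $\psi(\textbf{u},\textbf{u}')=\frac{1}{v^s-1}\|F(\textbf{u})-F(\textbf{u}')\|_1$, where $F(\textbf{u})=[F_1(\textbf{u}),\dots,F_{v^s-1}(\textbf{u})]^\top$. This is a scaled $l_1$ distance between images under a linear map. Hence non-negativity, symmetry, $\psi(\textbf{u},\textbf{u})=0$, and the triangle inequality follow at once from those of $\|\cdot\|_1$. The nominal branch of Eq.~(\ref{eq:measure}) is an average of such $\psi$'s, so it inherits all of these. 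The same holds for the nominal branch of Eq.~(\ref{eq:dist_basic2}), which is an average of $\text{dist}^{rs}$ over $s$.

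\emph{Step 2 (ordinal $A^r$, triangle inequality).} Define the adjacent-step lengths $c_t=\frac1d\sum_s\text{dist}^{rs}(o^r_t,o^r_{t+1})\ge 0$. Then $\text{dist}^r(o^r_m,o^r_h)=\sum_{t=\min(m,h)}^{\max(m,h)-1}c_t$, which is a path length on a line graph with non-negative edge weights. Symmetry is immediate, and $\text{dist}^r(o^r_m,o^r_m)=0$ because the sum is empty. For any third index $j$, the index interval $[\min(m,h),\max(m,h))$ is contained in the union of $[\min(m,j),\max(m,j))$ and $[\min(j,h),\max(j,h))$. Since every $c_t\ge0$, this gives $\text{dist}^r(o^r_m,o^r_h)\le\text{dist}^r(o^r_m,o^r_j)+\text{dist}^r(o^r_j,o^r_h)$. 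I would spell this out by a short case split on the position of $j$ relative to $m<h$.

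\emph{Step 3 (strict positivity).} I expect this to be the main obstacle, because distinct values could in principle have identical conditional distributions with respect to the other attributes. The fix is to use the term $s=r$ in the sums, which is always present. For $s=r$ we have $p(o^r_g\mid o^r_m)=\delta_{gm}$, so $\textbf{u}^{rr}_m$ is a one-hot vector.
\begin{itemize}
\item If $A^r$ is ordinal and $t$ lies between the two indices, $\text{dist}^{rr}(o^r_t,o^r_{t+1})=\frac{1}{v^r-1}>0$, because the cumulative sums differ exactly at position $t$. Hence every $c_t>0$, and $\text{dist}^r(o^r_m,o^r_h)>0$ whenever $m\ne h$.
\item If $A^r$ is nominal, the converted attribute $A^g$ with $g=m$ gives $\psi=1$ between $[1,0]^\top$ and $[0,1]^\top$. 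So $\text{dist}^{rr}(o^r_m,o^r_h)\ge 1/v^r>0$.
\end{itemize}

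\emph{Step 4 (objects).} Finally, $\text{dist}(\textbf{x}_i,\textbf{x}_j)$ is an average over $r$ of the metrics $\text{dist}^r$, so it is non-negative, symmetric, satisfies the triangle inequality, and vanishes when $\textbf{x}_i=\textbf{x}_j$. If $\textbf{x}_i\ne\textbf{x}_j$ as value vectors, they differ on some attribute $r$, and by Step 3 that attribute contributes a strictly positive term. This completes the verification that the measure is a metric on the space of value vectors.
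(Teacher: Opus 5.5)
Your proof is correct, and it has the same skeleton as the paper's. You first establish the four axioms for the intra-attribute distance $\text{dist}^r$, then lift them to $\text{dist}(\textbf{x}_i,\textbf{x}_j)$ by averaging over attributes.

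The difference is one of content. The paper's proof only lists the axioms at each level and declares them ``clear'' from Eqs.~(\ref{eq:measure})--(\ref{eq:dist_basic2}). You actually justify them in three ways:
\begin{enumerate}
\item You rewrite $\psi$ as a scaled $l_1$ distance between cumulative-sum vectors, which gives all the pseudometric properties at once.
\item You read the ordinal branch of Eq.~(\ref{eq:dist_basic2}) as a path length on a line graph with non-negative edge weights, which yields the triangle inequality.
\item You observe that the self-term $s=r$, always present in the sum over $s$, makes $\textbf{u}^{rr}_m$ one-hot. This forces $\text{dist}^r(o^r_m,o^r_h)>0$ for $m\neq h$.
\end{enumerate}

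The third point is the one genuinely non-obvious ingredient. Without it, two values with identical conditional profiles on all other attributes would be at distance zero, and the identity of indiscernibles would fail. The paper also relies on this fact silently when it asserts $d^r_{mh}>0$ in the proof of Lemma~\ref{lem:propto}.

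Your explicit restriction to the space of value vectors is also the right reading. Duplicate objects in $X$ with different indices are at distance zero, so on the indexed object set the measure is only a pseudometric.

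Both arguments tacitly assume that every possible value occurs in $X$ and that $v^s\ge 2$ for each attribute. These conditions are needed for Eqs.~(\ref{eq:cond_prob_ord}) and (\ref{eq:signature_dist}) to be well defined.
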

\begin{prf}
According to Eq.~(\ref{eq:measure})-(\ref{eq:dist_basic2}), it is clear that the defined intra-attribute distance satisfies the following properties for any $m,h,t\in N^r_O$ and $r\in N_A$:
\begin{enumerate}
\item $\text{dist}^r( {o}^r_m, {o}^r_h)\geq0$;
\item $ {o}^r_m= {o}^r_h\Leftrightarrow\text{dist}^r( {o}^r_m, {o}^r_h)=0 $;
\item $\text{dist}^r( {o}^r_m, {o}^r_h)=\text{dist}^r( {o}^r_h, {o}^r_m)$;
\item $\text{dist}^r( {o}^r_m, {o}^r_h)\leq\text{dist}^r( {o}^r_m, {o}^r_t)+\text{dist}^r( {o}^r_t, {o}^r_h)$.
\end{enumerate}
According to Eq.~(\ref{eq:dist_object}), it is clear that the following properties hold for any $i,j,l\in N_X$:
\begin{enumerate}
\item $\text{dist}(\textbf{x}_i,\textbf{x}_j)\geq0$;
\item $\textbf{x}_i=\textbf{x}_j\Leftrightarrow\text{dist}(\textbf{x}_i,\textbf{x}_j)=0$;
\item $\text{dist}(\textbf{x}_i,\textbf{x}_j)=\text{dist}(\textbf{x}_j,\textbf{x}_i)$;
\item $\text{dist}(\textbf{x}_i,\textbf{x}_j)\leq \text{dist}(\textbf{x}_i,\textbf{x}_l)+\text{dist}(\textbf{x}_l,\textbf{x}_j)$.
\end{enumerate}
The defined distance measure satisfies all the distance metric properties. \qed
\end{prf}

In practice, a set of distance matrices, i.e. $D=\{\textbf{D}^r|r\in N_A\}$ where $\textbf{D}^r$ is a $v^r\times v^r$ symmetric matrix storing intra-attribute distances of $A^r$, can be computed before clustering. The $(m,h)$th entry of $\textbf{D}^r$ is denoted as $d^r_{mh}$ where $d^r_{mh}=\text{dist}^r(o^r_m,o^r_h)$. With $D$, distances can be directly read off during clustering.

\begin{theorem}\label{the:dist_complexity}
Time complexity for computing the distance matrices $D$ is $O(nd^2+d^2V^3)$.
\end{theorem}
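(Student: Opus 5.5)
The plan is to split the computation of $D$ into two phases and bound each: a counting phase that builds all co-occurrence statistics (contributing $O(nd^2)$), and a distance-evaluation phase that combines these statistics into the entries of every $\textbf{D}^r$ (contributing $O(d^2V^3)$). Throughout, $V$ denotes the maximum number of possible values of any attribute, $V=\max_{r\in N_A}v^r$.

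\textbf{Counting phase.} For every ordered pair of attributes $(A^r,A^s)$, one pass over the $n$ objects fills a $v^r\times v^s$ contingency table with entries $\text{card}(X_g^s\cap X_m^r)$. The same pass also gives the marginal counts $\text{card}(X_m^r)$. Since there are $d^2$ pairs, this costs $O(nd^2)$ time in total.

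Dividing each table by its row marginals yields every conditional distribution $\textbf{u}^{rs}_m$ of Eq.~(\ref{eq:distribution_ord}). This step costs $O(d^2V^2)$, which is dominated by the final bound. For a nominal $A^s$, each converted ordinal attribute $A^g\in B^s$ has the distribution $\textbf{u}^{rg}_m=[p(o^s_g|o^r_m),\,1-p(o^s_g|o^r_m)]^\top$. These are read directly from the same table, so the conversion of Section~\ref{subsct:homo} adds no extra counting.

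\textbf{Evaluation phase.}
1. Fix $r$, $s$ and a pair $(m,h)$. In the ordinal case, $\psi(\textbf{u}^{rs}_m,\textbf{u}^{rs}_h)$ in Eq.~(\ref{eq:signature_dist}) is computed in $O(v^s)$ time by accumulating the prefix sums of the difference vector.
2. In the nominal case, Eq.~(\ref{eq:measure}) averages $v^s$ values of $\psi$ on two-bin distributions. Each such value costs $O(1)$, so the total is again $O(v^s)$. Hence every $\text{dist}^{rs}(o^r_m,o^r_h)$ costs $O(V)$.
3. For a nominal $A^r$, Eq.~(\ref{eq:dist_basic2}) needs these values for all $O(V^2)$ pairs $(m,h)$ and all $d$ attributes $A^s$. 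This gives $O(dV^3)$ per attribute $A^r$.
4. For an ordinal $A^r$, only the adjacent pairs $(t,t+1)$ are needed, costing $O(dV^2)$. The sums over $t$ from $\min(m,h)$ to $\max(m,h)-1$ then take $O(V)$ each, or $O(1)$ each with prefix sums. Either way the cost is at most $O(V^3)$ per $\textbf{D}^r$, which stays within the bound.
5. Summing over all $d$ attributes $A^r$ gives $O(d^2V^3)$ for this phase.

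Adding the two phases gives $O(nd^2+d^2V^3)$, as claimed.

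\textbf{Main obstacle.} The only delicate point is the nominal case. Treating each $A^g\in B^s$ as a separate attribute could look like it multiplies the cost by another factor of $V$ in the counting phase. The argument above avoids this because all the needed conditional probabilities come from the single $A^r$–$A^s$ contingency table.

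I expect the stated bound to be a slightly loose upper bound, since nothing in the analysis forces the $V^3$ term to be tight. I will present it as an upper bound rather than try to sharpen it.
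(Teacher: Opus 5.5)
Your proposal is correct and follows the paper's proof. Both arguments use the same two steps: an $O(nd^2)$ scan over attribute pairs to build the conditional distributions $\textbf{u}^{rs}_m$, then $O(V)$ per $\text{dist}^{rs}$ value, which gives $O(dV^3)$ per $\textbf{D}^r$ and $O(d^2V^3)$ overall. Your extra observations are valid refinements the paper glosses over: the nominal-to-Boolean conversion needs no additional counting, and the ordinal case (adjacent-pair distances plus prefix sums or successive accumulation) costs only $O(dV^2+V^2)$, below the paper's stated $O(dV^3)$.
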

\begin{prf}
Conditional probability distribution $\textbf{u}_m^{rs}$ with $r,s\in N_A$ and $m\in N^r_O$ should be obtained before distance computation. For each $\textbf{u}_m^{rs}$, $o^r_m$'s corresponding values on $A^s$ should be scanned once with time complexity $O(\text{card}(X^r_m))$, and for all the $\textbf{u}_m^{rs}$ with $m\in N^r_O$, the scan is with time complexity $O(n)$. Such scan should be performed for each pair of attributes, and thus the time complexity is $O(nd^2)$.

Given a pair of possible values ${o}^r_m$ and ${o}^r_h$ with $m,h\in N^r_O$ and $r\in N_A$, the time complexity for computing the distance between them based on the known $\textbf{u}_m^{rs}$ and $\textbf{u}_h^{rs}$ is $O(V)$ in both the two cases of Eq.~(\ref{eq:measure}). Note that $V=\max(v^1,v^2,...,v^d)$ is the maximum number of possible values among all the attributes, which is adopted to simplify the time complexity analysis. To obtain $\textbf{D}^r$, the time complexity for computing the $V(V-1)/2$ intra-attribute distances is $O(dV^3)$ in the case $A^r\in A^{(\text{nom})}$ of Eq.~(\ref{eq:dist_basic2}). In the case $A^r\in A^{(\text{ord})}$ of Eq.~(\ref{eq:dist_basic2}), distance between possible values with order difference 1 can be computed first, and then the distance between possible values with order difference 2, 3, ..., $V-1$ can be successively computed based on the distances computed in the previous step. Therefore, the time complexity for computing the $V(V-1)/2$ intra-attribute distances is also $O(dV^3)$ in the case $A^r\in A^{(\text{ord})}$ of Eq.~(\ref{eq:dist_basic2}). The time complexity for computing a total of $d$ distance matrices $\textbf{D}^r$ is thus $O(d^2V^3)$. Hence, the overall time complexity for obtaining $D$ is $O(nd^2+d^2V^3)$.
\qed
\end{prf}

\section{Clustering Based on Intra-Attribute \\ Distance Weighting}\label{sct:clustering}
Often, separately treating the cross-coupled distance defining and data clustering results in a suboptimal solution. This section will therefore propose a learning mechanism that adjusts the defined intra-attribute distances to suit certain clustering tasks. We have constructed graph-like structure for the intra-attribute possible values to define their distances in Section~\ref{sct:dist}, and will learn the weights of the distances in an iterative way with data clustering in the following subsections. Before introducing the details of our algorithm, let us conceptually discuss the existing methods whose learning paradigms are intuitively similar to ours.

Several clustering of bandits algorithms \cite{r1bandit5,r1bandit1,r1bandit3} have been proposed to construct graph for the objects (i.e., users in their application scenarios) and dynamically perform graph clustering according to the item preference of users over time. Further, the one in \cite{r1bandit2} captures the collaborative effects of the users, and the one in \cite{r1bandit4} captures the bi-collaborative effects between users and items by iteratively partitioning user and item graphs. The commonality of our method and the above-mentioned ones is that they all iteratively learn (1) the relationship between objects and (2) certain predictions for the objects. The differences are: (1) we construct graphs only for studying the distance definition between possible values, while most of the above-mentioned methods construct graphs for objects and cutting the graphs for object partitioning, (2) we optimize the prediction from objects to object clusters, while they optimize the prediction of items to be recommended to users. Although the above-mentioned methods are not solving the same type of problem as ours, their paradigms can provide inspiration for applying our method in more complex environments in the future, for example, in on-line or distributed situations.
In the following, we will elaborate how to solve the two problems $\textbf{P. 1}$ and $\textbf{P. 2}$ stated in Section~\ref{sct:preliminary}, and present the clustering algorithm, together with the time-complexity analysis.

\subsection{Update $\textbf{Q}$ As Given $W$ and $P$}\label{subsct:partition}
The process of solving problem $\textbf{P. 1}$ is to obtain a data partition according to a certain distance measure and cluster representation, which actually adopts the same basic idea as most $k$-modes-type algorithms. The difference is that we use the statistical information $P$ (defined in Section~\ref{sct:preliminary}) instead of cluster modes in representing the clusters, which ensures the extraction of more rich information for learning distance weights $W$ in solving problem $\textbf{P. 2}$. Specifically, the details of solving $\textbf{P. 1}$ are presented as follows. According to the objective function defined in Eq.~(\ref{eq:objective}), $\textbf{P. 1}$ is solved by fixing $W=\hat{W}$ and $P=\hat{P}$, and computing $\textbf{Q}$. Given distance matrices $D$, $\textbf{Q}$ is computed by
\begin{equation}\label{eq:update_q}
q_{il}=\left\{
\begin{array}{ll}
1, & \text{if}\ {l=\arg\min_y\text{dist}(\textbf{x}_i,C_y)}\\
   & \quad\ \ =\arg\min_y\sum_{r=1}^d\sum_{h=1}^{v^r}\hat{w^r_{mh}}d^r_{mh}\hat{p^r_{yh}}\\
0, & \text{otherwise}
\end{array}
\right.\\
\end{equation}
for $\textbf{x}_i$ with $x^r_i=o^r_m$. Since we represent the clusters using their probability distributions by $P$ instead of using cluster modes, the form of the solution in Eq.~(\ref{eq:update_q}) is different form the conventional $k$-modes-type algorithms. Thus, solution to $\textbf{P. 1}$ is also rigorously given in Theorem~\ref{the:solve_p1}.
\begin{theorem}\label{the:solve_p1}
Let $W$ and $P$ be fixed, $Z(\textbf{Q},\hat{P},\hat{W})$ is minimized iff $\textbf{Q}$ is computed utilizing Eq.~(\ref{eq:update_q}).
\end{theorem}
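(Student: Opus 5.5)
With $W=\hat{W}$ and $P=\hat{P}$ held fixed, the objective in Eq.~(\ref{eq:objective}) separates into independent problems, one per object. The plan is to write
\begin{equation*}
Z(\textbf{Q},\hat{P},\hat{W})=\sum_{i=1}^n Z_i,\qquad Z_i=\sum_{l=1}^k q_{il}\,\text{dist}(\textbf{x}_i,C_l).
\end{equation*}
Each coefficient $\text{dist}(\textbf{x}_i,C_l)$ is a constant here. By Eqs.~(\ref{eq:dist_oc}) and (\ref{eq:dist_oca}), and with the precomputed entries $d^r_{mh}$, it equals $\sum_{r=1}^d\sum_{h=1}^{v^r}\hat{w}^r_{mh}d^r_{mh}\hat{p}^r_{lh}$ for $x^r_i=o^r_m$. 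It depends only on $\textbf{x}_i$, $\hat{W}$ and $\hat{P}$, and not on $\textbf{Q}$. This is the point that requires $P$ to be treated as fixed, as in \textbf{P. 1}, rather than recomputed from $\textbf{Q}$. The constraints $\sum_l q_{il}=1$ and $q_{il}\in\{0,1\}$ involve only row $i$ of $\textbf{Q}$. So minimizing $Z$ is the same as minimizing each $Z_i$ separately over the $k$ admissible rows, namely the unit vectors $\textbf{e}_l$.

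For sufficiency, fix $i$ and let $y^*=\arg\min_y \text{dist}(\textbf{x}_i,C_y)$. Any feasible row picks out exactly one $l$, so $Z_i=\text{dist}(\textbf{x}_i,C_l)\geq \text{dist}(\textbf{x}_i,C_{y^*})$. Equality holds when $q_{iy^*}=1$, which is the choice prescribed by Eq.~(\ref{eq:update_q}). Summing over $i$, no feasible $\textbf{Q}$ can give a value of $Z$ below the one produced by Eq.~(\ref{eq:update_q}).

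For necessity, suppose some row $i$ assigns $\textbf{x}_i$ to a cluster $l$ with $\text{dist}(\textbf{x}_i,C_l)>\min_y\text{dist}(\textbf{x}_i,C_y)$. Moving that single row to the minimizer strictly decreases $Z_i$ and leaves every other term unchanged. Hence $\textbf{Q}$ was not a minimizer, and every minimizer must satisfy Eq.~(\ref{eq:update_q}) row by row.

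The main obstacle is ties in the $\arg\min$. When several clusters attain the minimum, Eq.~(\ref{eq:update_q}) has to be read as ``$q_{il}=1$ for one minimizing $l$''. Under that reading the ``iff'' concerns the set of minimizers: a $\textbf{Q}$ minimizes $Z$ exactly when each of its rows selects some minimizing cluster. I would state this tie-breaking convention explicitly before writing the argument.
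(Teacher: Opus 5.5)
Your proposal is correct and follows the paper's proof. Both fix $\hat W,\hat P$, split $Z$ into independent per-object sums $z_i=\sum_l q_{il}z_{il}$, and use the one-hot row constraint to conclude that each row must select $\arg\min_y z_{iy}$. Your explicit necessity step and your tie-breaking convention are a bit more careful than the paper, which does not address ties, but no new idea is involved.
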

\begin{prf}
For any given $W=\hat{W}$ and $P=\hat{P}$, all the inner sums of the quantity
\begin{flalign}\nonumber
&Z(\textbf{Q},\hat{P},\hat{W})=\sum_{i=1}^n\sum_{l=1}^kq_{il}\text{dist}(\textbf{x}_i,C_l)&
\end{flalign}
are nonnegative and independent. Let $o^r_m=x^r_i$, we can write the inner sum contributed by $\textbf{x}_i$ as
\begin{flalign}\nonumber
&z_i=\sum_{l=1}^kq_{il}\text{dist}(\textbf{x}_i,C_l)=\sum_{l=1}^kq_{il}\sum_{r=1}^d\sum_{h=1}^{v^r}\hat{w^r_{mh}}d^r_{mh}\hat{p^r_{lh}}.&\nonumber
\end{flalign}
Let $z_{il}=\sum_{r=1}^d\sum_{h=1}^{v^r}\hat{w^r_{mh}}d^r_{mh}\hat{p^r_{lh}}$, which is the inner sum contributed by $\textbf{x}_i$ in $C_l$. We then obtain
\begin{flalign}\nonumber
&z_i=\sum_{l=1}^kq_{il}z_{il}.&
\end{flalign}
Since $\sum_{l=1}^kq_{il}=1$ and $q_{il}\in\{0,1\}$, it is clear that $z_i$ is minimized iff the minimum $z_{il}$ is assigned with $q_{il}=1$ where $l$ is determined by
\begin{flalign}\nonumber
l&=\arg\min_yz_{iy}=\arg\min_y\sum_{r=1}^d\sum_{h=1}^{v^r}\hat{w^r_{mh}}d^r_{mh}\hat{p^r_{yh}}&\nonumber
\end{flalign}
and the other $z_{il}$s are assigned with $q_{il}=0$. The result follows.\qed
\end{prf}

We have presented the solution of updating \textbf{Q}. Each time a new \textbf{Q} is obtained, the cluster representation $P$ is updated accordingly, and such process is iterated until convergence.

\subsection{Update $W$ As Given $\textbf{Q}$ and $P$}\label{subsct:weight}
In Section~\ref{subsct:partition}, we have presented the solution of $\textbf{P. 1}$. Then, $\textbf{P. 2}$ should be solved based on the present \textbf{Q} and $P$ to learn distance weights $W$. In this part, a novel learning scheme is designed by mining the latent interaction between data partition and intra-attribute distances, so as to seek for more appropriate data partition in the next iteration based on the defined intra-attribute distances and the newly learned $W$. The details of solving $\textbf{P. 2}$ are presented as follows. Given fixed $\hat{\textbf{Q}}$ and $\hat{P}$, the objective function defined by Eq.~(\ref{eq:objective}) can be written as
\begin{align}\nonumber\label{eq:obj_cate}
Z(\hat{\textbf{Q}},\hat{P},W)&=\sum_{i=1}^n\sum_{l=1}^k\hat{q_{il}}\text{dist}(\textbf{x}_i,C_l)\\\nonumber
 &=\sum_{i=1}^n\sum_{l=1}^k\hat{q_{il}}\sum_{r=1}^d\sum_{h=1}^{v^r}w^r_{mh}d^r_{mh}\hat{p^r_{lh}}\\
 &=\sum_{r=1}^d\sum_{m=1}^{v^r}\sum_{h=1}^{v^r}w^r_{mh}d^r_{mh}\sum_{l=1}^k\frac{f^r_{lm}f^r_{lh}}{f_l}
\end{align}
where $f^r_{lm}=\text{card}(X^r_{m}\cap X_{C_l})$ and $f^r_{lh}=\text{card}(X^r_{h}\cap X_{C_l})$ are the total number of objects in $C_l$ with their $r$th values equal to $o^r_m$ and $o^r_h$, respectively, $f_l=\text{card}(X_{C_l})$ is the number of objects in $C_l$, and we have $f^r_{lh}/f_l=\hat{p^r_{lh}}$. In most $k$-modes-type algorithm with attribute weighting mechanism \cite{wkm,wkm2,mwkm}, Lagrangian multiplier is used to convert the constrained weights computation problem into an unconstrained problem so that the optimal attribute weights can be computed directly in each iteration. However, solving our intra-attribute distance weighting problem in this way may encounter two awkward issues:
\begin{itemize}
\item \textit{Frequency Effect:} For attribute weighting, each attribute has the identical number of values, which is the basis for success in making the computed weights comparable. However, the occurrence frequencies of intra-attribute distances (i.e. $\sum_{l=1}^kf^r_{lm}f^r_{lh}$ of $d^r_{mh}$) are usually different from each other, which makes the computed weights of intra-attribute distances incomparable.
\item \textit{Co-occurrence Sparsity:} It is common for a real categorical data set that an intra-attribute distance (i.e. $d^r_{mh}$) never occur in a cluster, so that no statistical information is provided for the computation of its corresponding weight $w^r_{mh}$. If we set such weights to 0, the problem still cannot be fixed because there are many such weights preventing the algorithm from convergence.
\end{itemize}

We propose a novel intra-attribute distance weight updating scheme to circumvent the above-discussed issues. In general, a larger $d_{mh}^r$ indicates that the two corresponding possible values $o^r_m$ and $o^r_h$ are more dissimilar. That is, $d_{mh}^r$ is expected to contribute more in partitioning the objects in $X^r_m=\{\textbf{x}_i|x^r_i=o^r_m, i\in N_X\}$ and the objects in $X^r_h=\{\textbf{x}_i|x^r_i=o^r_h, i\in N_X\}$ into different clusters. Thus, given a data partition $\hat{\textbf{Q}}$, if $d_{mh}^r$ is larger but more objects in $X^r_m$ and $X^r_h$ are assigned into the same cluster, it is indicated that $d_{mh}^r$ does not contribute in partitioning the objects $X^r_m$ and $X^r_h$ into different clusters as expected. Accordingly, the weight of $d_{mh}^r$ should be estimated as its expectation in reducing $Z(\hat{\textbf{Q}},\hat{P},W)$, and we have
\begin{equation}\label{eq:propto}
w_{mh}^r\propto [d^r_{mh}\sum_{l=1}^k\frac{f^r_{lm}f^r_{lh}}{f^r_mf^r_h}]^{-1}
\end{equation}
where $f^r_m=\text{card}(X^r_m)=\sum_{l=1}^kf^r_{lm}$ and $f^r_h=\text{card}(X^r_h)=\sum_{l=1}^kf^r_{lh}$ are the intra-cluster occurrence frequencies of $o^r_m$ and $o^r_h$, respectively. The term $(f^r_{lm}f^r_{lh})/(f^r_mf^r_h)$ quantifies the occurrence of $d^r_{mh}$ in $C_l$ as a joint occurrence probability of $o^r_m$ and $o^r_h$ in $C_l$, which avoids the \textit{Frequency Effect}. If we directly update $W$ by $w_{mh}^r=[d^r_{mh}\sum_{l=1}^k(f^r_{lm}f^r_{lh})/(f^r_mf^r_h)]^{-1}$ according to Eq.~(\ref{eq:propto}), the \textit{Co-occurrence Sparsity} issue may make the values of different weights vary greatly in the interval $[1/d^r_{mh},\infty)$, which may cause non-convergence. Thus, we discuss how to novelly circumvent the \textit{Co-occurrence Sparsity} issue in the following.

\begin{lemma}\label{lem:constant}
Given an arbitrary partition $\textbf{Q}$ of data set $S$, sum of the intra-cluster distance $E^{r(\text{intra})}_{mh}$
and inter-cluster distance $E^{r(\text{inter})}_{mh}$ contributed by $d^r_{mh}$ is a constant.
\end{lemma}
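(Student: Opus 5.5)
We first have to fix what $E^{r(\text{intra})}_{mh}$ and $E^{r(\text{inter})}_{mh}$ mean, since the statement uses them before they are defined. Guided by Eq.~(\ref{eq:obj_cate}) and Eq.~(\ref{eq:propto}), we take them as the total distance $d^r_{mh}$ accumulated over pairs of objects, one from $X^r_m$ and one from $X^r_h$. The intra-cluster part counts pairs lying in the same cluster:
\begin{equation*}
E^{r(\text{intra})}_{mh}=d^r_{mh}\sum_{l=1}^k f^r_{lm}f^r_{lh}.
\end{equation*}
The inter-cluster part counts pairs lying in different clusters:
\begin{equation*}
E^{r(\text{inter})}_{mh}=d^r_{mh}\sum_{l=1}^k\sum_{y\neq l} f^r_{lm}f^r_{yh}.
\end{equation*}
If the paper normalizes these by $f^r_mf^r_h$, in line with the joint-probability reading after Eq.~(\ref{eq:propto}), the argument below is unchanged.

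The proof is a counting identity. Adding the two expressions and merging the double sums gives
\begin{equation*}
E^{r(\text{intra})}_{mh}+E^{r(\text{inter})}_{mh}=d^r_{mh}\sum_{l=1}^k\sum_{y=1}^k f^r_{lm}f^r_{yh}=d^r_{mh}\Big(\sum_{l=1}^k f^r_{lm}\Big)\Big(\sum_{y=1}^k f^r_{yh}\Big).
\end{equation*}
The constraint $\sum_l q_{il}=1$, $q_{il}\in\{0,1\}$, makes the clusters a disjoint cover of $X$. Hence $\sum_l f^r_{lm}=\text{card}(X^r_m)=f^r_m$, and likewise $\sum_y f^r_{yh}=f^r_h$. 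The sum therefore equals $d^r_{mh}f^r_mf^r_h$, which becomes $d^r_{mh}$ after normalization.

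It remains to note that this quantity does not depend on $\textbf{Q}$. The value $d^r_{mh}$ comes from Eq.~(\ref{eq:dist_basic2}), and the counts $f^r_m$, $f^r_h$ come from the data alone. Both are fixed before clustering, as remarked after Theorem~\ref{the:dist_complexity}. So the sum is a constant for every partition.

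The only step needing care is the first one: matching our definitions of $E^{r(\text{intra})}_{mh}$ and $E^{r(\text{inter})}_{mh}$ to the paper's. In particular, the two terms must partition exactly the same set of $(m,h)$-pairs, whether ordered or unordered, and for $m=h$ with or without self-pairs. Under any consistent convention, the total is a partition-independent product of marginal counts times $d^r_{mh}$, so the conclusion holds either way.
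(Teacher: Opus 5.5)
Your proof is correct and follows the paper's: your $E^{r(\text{inter})}_{mh}=d^r_{mh}\sum_{l}\sum_{y\neq l}f^r_{lm}f^r_{yh}$ is exactly the paper's $d^r_{mh}\sum_{s=1}^{k-1}\sum_{u=s+1}^{k}(f^r_{sm}f^r_{uh}+f^r_{um}f^r_{sh})$, written over ordered cluster pairs. Like the paper, you merge the intra- and inter-cluster sums into $d^r_{mh}\sum_{s,u}f^r_{sm}f^r_{uh}=d^r_{mh}f^r_mf^r_h$, which does not depend on $\textbf{Q}$; your ordered-pair indexing makes that merging step cleaner than the paper's intermediate line.
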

\begin{prf}
We first note that $E^{r(\text{intra})}_{mh}=d^r_{mh}\sum_{l=1}^kf^r_{lm}f^r_{lh}$ and $E_{mh}^{r(\text{inter})}=d^r_{mh}\sum_{s=1}^{k-1}\sum_{u=s+1}^k(f^r_{sm}f^r_{uh}+f^r_{um}f^r_{sh})$.
Let $E^{r(\text{total})}_{mh}=E_{mh}^{r(\text{intra})}+E_{mh}^{r(\text{inter})}$, we have
\begin{flalign}\nonumber
E^{r(\text{total})}_{mh}&=d^r_{mh}\left(\sum_{l=1}^kf^r_{lm}f^r_{lh}+\sum_{s=1}^{k-1}\sum_{u=s+1}^k\left(f^r_{sm}f^r_{uh}+f^r_{um}f^r_{sh}\right)\right)&\\\nonumber
&=d^r_{mh}\left(\sum_{s=1}^k\sum_{u=s}f^r_{sm}f^r_{uh}\notag\right.&\\\nonumber
&\ \ \ \qquad\notag\left.+\sum_{s=1}^{k-1}\sum_{u=s+1}^kf^r_{sm}f^r_{uh}+\sum_{s=1}^{k-1}\sum_{u=s+1}^kf^r_{um}f^r_{sh}\right)&\\\nonumber
&=d^r_{mh}\sum_{s=1}^k\sum_{u=1}^kf^r_{sm}f^r_{uh}=d^r_{mh}\sum_{s=1}^kf^r_{sm}\sum_{u=1}^kf^r_{uh}\\\nonumber
&=d^r_{mh}f^r_mf^r_h.&
\end{flalign}
Since $d^r_{mh}$, $f^r_m$, and $f^r_h$ are constants for a given data set $S$, it is clear that $E^{r(\text{total})}_{mh}$ is a constant. The result follows. \qed
\end{prf}

\noindent Based on Lemma~\ref{lem:constant}, Eq.~(\ref{eq:propto}) can be transformed to avoid the \textit{Co-occurrence Sparsity} issue.

\begin{lemma}\label{lem:propto}
Given Eq.~(\ref{eq:propto}), $w_{mh}^{r}\propto d_{mh}^r\sum_{s=1}^{k-1}\sum_{u=s+1}^k(f^r_{sm}f^r_{uh}+f^r_{um}f^r_{sh})/(f^r_mf^r_h)$ holds when $\exists\ l\in N_C$ so that $f^r_{lm}f^r_{lh}\neq0$.
\end{lemma}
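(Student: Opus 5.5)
The plan is to read the relation ``$\propto$'' in Eq.~(\ref{eq:propto}) the way the paper uses it: the weight $w^r_{mh}$ should move in the same direction as the stated quantity when the partition $\textbf{Q}$ changes. The lemma then says we may swap the reciprocal of the normalized intra-cluster term for the normalized inter-cluster term. Throughout, fix the attribute $A^r$ and the pair of values $(o^r_m,o^r_h)$.

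The first step is to normalize Lemma~\ref{lem:constant} by the constant $f^r_mf^r_h$. Write
\begin{equation*}
a=d^r_{mh}\sum_{l=1}^k\frac{f^r_{lm}f^r_{lh}}{f^r_mf^r_h}=\frac{E^{r(\text{intra})}_{mh}}{f^r_mf^r_h},\qquad
b=d^r_{mh}\sum_{s=1}^{k-1}\sum_{u=s+1}^k\frac{f^r_{sm}f^r_{uh}+f^r_{um}f^r_{sh}}{f^r_mf^r_h}=\frac{E^{r(\text{inter})}_{mh}}{f^r_mf^r_h}.
\end{equation*}
Dividing the identity $E^{r(\text{total})}_{mh}=d^r_{mh}f^r_mf^r_h$ by $f^r_mf^r_h$ gives $a+b=d^r_{mh}$. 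This normalizing constant depends only on the data set $S$, not on $\textbf{Q}$. Hence $b=d^r_{mh}-a$, where $d^r_{mh}$ is fixed by Theorem~\ref{the:metric} and Eq.~(\ref{eq:dist_basic2}).

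The second step uses the hypothesis that $f^r_{lm}f^r_{lh}\neq0$ for some $l\in N_C$. Since all frequencies are nonnegative, this gives $a>0$, and $d^r_{mh}>0$ for $m\neq h$ by the metric property. So the right-hand side $a^{-1}$ of Eq.~(\ref{eq:propto}) is a finite positive number, and it lies in $[1/d^r_{mh},\infty)$ because $a\le d^r_{mh}$.

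The third step compares the two quantities as functions of the single variable $a\in(0,d^r_{mh}]$, which is the only part that depends on the partition. Both $a\mapsto a^{-1}$ and $a\mapsto d^r_{mh}-a=b$ are strictly decreasing. So any change of $\textbf{Q}$ that raises (resp.\ lowers) the normalized co-occurrence of $o^r_m$ and $o^r_h$ within clusters lowers (resp.\ raises) both quantities. In the sense used for Eq.~(\ref{eq:propto}), the weight that is $\propto a^{-1}$ can therefore be taken $\propto b$, which is the claimed expression. The benefit is that $b$ stays bounded in $[0,d^r_{mh}]$, which is what addresses the \emph{Co-occurrence Sparsity} issue.

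The main obstacle is the meaning of ``$\propto$''. Literal proportionality $a^{-1}=c\,(d^r_{mh}-a)$ with a constant $c$ is false, so the lemma can only mean an order-preserving (monotone) correspondence. I would state that interpretation explicitly at the start. I would also note that the comparison is made for a fixed pair $(m,h)$ across partitions: across different pairs with different $d^r_{mh}$ the orderings need not coincide. The global normalization constraint in Eq.~(\ref{eq:objective}) is applied afterwards to the $b$-values, so it does not affect this argument.
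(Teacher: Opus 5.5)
Your proposal is correct and follows essentially the same route as the paper. The paper defines $H^r_{mh}$ as your $b$ and uses Lemma~\ref{lem:constant} to rewrite the reciprocal in Eq.~(\ref{eq:propto}) as $1/(E^{r(\text{total})}_{mh}/(f^r_mf^r_h)-H^r_{mh})$; the hypothesis makes that denominator (your $a$) positive, and the paper then concludes that $w^r_{mh}$ increases with $H^r_{mh}$. Your explicit reading of ``$\propto$'' as a monotone correspondence for a fixed pair $(m,h)$ is exactly what the paper's argument actually establishes, so stating it is a strength rather than a deviation.
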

\begin{prf}
Let $H^r_{mh}=d_{mh}^r\sum_{s=1}^{k-1}\sum_{u=s+1}^k(f^r_{sm}f^r_{uh}+f^r_{um}f^r_{sh})/(f^r_mf^r_h)$. We first prove that $H^r_{mh}<E^{r(\text{total})}_{mh}/(f^r_mf^r_h)$. According to the proof of Lemma~\ref{lem:constant}, we derive
\begin{flalign}\label{eq:the_h}
&\frac{E^{r(\text{total})}_{mh}}{f^r_mf^r_h}-H^r_{mh}=\frac{d^r_{mh}\sum_{l=1}^kf^r_{lm}f^r_{lh}}{f^r_mf^r_h}.&
\end{flalign}
Since $\exists\ l\in N_C$ so that $f^r_{lm}f^r_{lh}\neq0$, and $f^r_{lm}$ and $f^r_{lh}$ are non-negative integers, we have $\sum_{l=1}^kf^r_{lm}f^r_{lh}>0$; Since $f^r_m$ and $f^r_h$ are positive constants and $d^r_{mh}>0$ for two different possible values $o^r_m$ and $o^r_{h}$, we then have
\begin{flalign}\nonumber
&\dfrac{d^r_{mh}\sum_{l=1}^kf^r_{lm}f^r_{lh}}{f^r_mf^r_h}>0\ \Rightarrow\ H^r_{mh}<\frac{E^{r(\text{total})}_{mh}}{f^r_mf^r_h}.&
\end{flalign}
From Eq.~(\ref{eq:propto}) and (\ref{eq:the_h}), we derive
\begin{flalign}\nonumber
&w^{r}_{mh}\propto\frac{1}{\frac{E^{r(\text{total})}_{mh}}{f^r_mf^r_h}-H^r_{mh}}.&
\end{flalign}
Since we have proved $H^r_{mh}<E^{r(\text{total})}_{mh}/(f^r_mf^r_h)$, and $E^{r(\text{total})}_{mh}/(f^r_mf^r_h)$ is a constant, it is clear that the value of $w^r_{mh}$ is proportional to the value of $H^r_{mh}$, which can be written as
\begin{flalign}\label{eq:propto2}
w_{mh}^{r}\propto\frac{d_{mh}^r\sum_{s=1}^{k-1}\sum_{u=s+1}^k(f^r_{sm}f^r_{uh}+f^r_{um}f^r_{sh})}{f^r_mf^r_h}.
\end{flalign}
The result follows. \qed
\end{prf}

According to Lemma~\ref{lem:propto}, the weights of intra-attribute distances are updated by
\begin{equation}\label{eq:novel_update}
w_{mh}^{r\text{(new)}}=\dfrac{d_{mh}^r\sum_{s=1}^{k-1}\sum_{u=s+1}^k(f^r_{sm}f^r_{uh}+f^r_{um}f^r_{sh})}{f^r_mf^r_h}.
\end{equation}
Eq.~(\ref{eq:novel_update}) is obtained with restriction $\exists\ l\in N_C$ so that $f^r_{lm}f^r_{lh}\neq0$. We also demonstrate that when $\forall\ l\in N_C$, $f^r_{lm},f^r_{lh}=0$, Eq.~(\ref{eq:novel_update}) is still meaningful. $\forall\ l\in N_C$,\ $f^r_{lm},f^r_{lh}=0$ indicates that the objects in $X_m^r$ and the objects in $X_h^r$ never appear in the same cluster, which means that the contribution of $d^r_{mh}$ in partitioning the objects in $X_m^r$ and the objects in $X_h^r$ into different clusters reaches the maximum, i.e. $\sum_{s=1}^{k-1}\sum_{u=s+1}^k(f^r_{sm}f^r_{uh}+f^r_{um}f^r_{sh})/(f^r_mf^r_h)=1$. Since the value of $\sum_{s=1}^{k-1}\sum_{u=s+1}^k(f^r_{sm}f^r_{uh}+f^r_{um}f^r_{sh})/f^r_mf^r_h$ is in the interval $[0,1]$, it is clear that weights updating utilizing Eq.~(\ref{eq:novel_update}) will not be influenced by the \textit{Co-occurrence Sparsity} issue. We also use soft-max, i.e. $w_{mh}^{r}=w_{mh}^{r\text{(new)}}/\sum_{s=1}^{d}\sum_{g=1}^{v^s-1}\sum_{t=g+1}^{v^s}w_{gt}^{s\text{(new)}}$, to make the updated weights satisfy $\sum_{r=1}^d\sum_{m=1}^{v^r-1}\sum_{h=m+1}^{v^r}w^r_{mh}=1$.

Advantages of the proposed weights updating scheme are summarized below:
\begin{itemize}
\item \textit{Frequency Dominance} issue is avoided.
\item \textit{Co-occurrence Sparsity} issue is novelly circumvented.
\item It is parameter-free, and the clustering algorithm based on it (see Section~\ref{subsct:algorithm}) always converge quickly, which has been illustrated in Section~\ref{sct:experiments}.
\end{itemize}

\subsection{Complete Clustering Algorithm}\label{subsct:algorithm}

The complete clustering algorithm called HD-NDW integrates the solutions of $\textbf{P. 1}$ and the Novel Distance Weighting (NDW) mechanism for solving $\textbf{P. 2}$. As described in Algorithm~\ref{alg:algorithm}, it iteratively updates the data partitions and weights of the distances defined by the Homogeneous Distance (HD) metric for data partitioning. More specifically, $\textbf{Step 1}$ acts as a complete clustering algorithm that learns a data partition, which provides information for updating the weights of distances in $\textbf{Step 2}$. This is why we put $\textbf{Step 1}$ before $\textbf{Step 2}$ in HD-NDW. After reasonable distance weights are learned according to the data partition, the weights are fed back to $\textbf{Step 1}$ for learning more appropriate data partition, and such procedures iterate until convergence. Time complexity of HD-NDW is analyzed in Theorem~\ref{the:algorithm_complexity}.

\begin{algorithm}[t]
\caption{HD-NDW Clustering Algorithm}
\label{alg:algorithm}
\setlength{\parskip}{-1.3em}
\ \\
\setlength{\parskip}{1em}
\begin{flushleft}
\setlength{\parskip}{-1em}
\hangafter=1
\setlength{\hangindent}{1em}
\textbf{Input:} Data set $S$, number $k$ of clusters, distance matrices $D$.\\

\setlength{\parskip}{-1em}
\hangafter=1
\setlength{\hangindent}{1em}
\textbf{Output:} Partition $\textbf{Q}$.\\

\setlength{\parskip}{-1em}
\hangafter=1
\setlength{\hangindent}{1em}
\textbf{Step 0:} Initialize the time-step by $\tau=0$; Initialize $P^{(\tau)}$ and $W^{(\tau)}$;\\

\setlength{\parskip}{-1em}
\hangafter=1
\setlength{\hangindent}{1em}
\textbf{Step 1:} Fix $W^{(\tau)}$ and $P^{(\tau)}$, iteratively update $\textbf{Q}^{(\tau)}$ by Eq.~(\ref{eq:update_q}) and update $P^{(\tau)}$ according to $\textbf{Q}^{(\tau)}$ until convergence, obtain $\textbf{Q}^{(\tau+1)}$ and $P^{(\tau+1)}$; If $\textbf{Q}^{(\tau+1)}\neq\textbf{Q}^{(\tau)}$, go to \textbf{Step 2}; Otherwise, stop and \textbf{Output} $\textbf{Q}^{(\tau)}$.\\

\setlength{\parskip}{-1em}
\hangafter=1
\setlength{\hangindent}{1em}
\textbf{Step 2:} Fix $\textbf{Q}^{(\tau+1)}$ and $P^{(\tau+1)}$, update $W^{(\tau)}$ by Eq.~(\ref{eq:novel_update}), obtain $W^{(\tau+1)}$; Update the time-step by $\tau=\tau+1$, go to \textbf{Step 1};
\setlength{\parskip}{-2.5em}
\end{flushleft}
\end{algorithm}

\begin{theorem}\label{the:algorithm_complexity}
Time complexity of Algorithm~\ref{alg:algorithm} is $O(E(kdVnI+dn+dV^2k))$, supposing $\textbf{Step 1}$ needs $I$ iterations to converge, and the loop of $\textbf{Step 1}$ and $\textbf{2}$ needs $E$ iterations to converge.
\end{theorem}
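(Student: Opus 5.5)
The plan is to bound the cost of one outer iteration (one pass through \textbf{Step 1} followed by \textbf{Step 2}) and multiply by the number $E$ of outer iterations. Throughout, $V=\max(v^1,\dots,v^d)$ as in Theorem~\ref{the:dist_complexity}. The distance matrices $D$ are an input to Algorithm~\ref{alg:algorithm}, so their cost from Theorem~\ref{the:dist_complexity} is not charged here. Initialization in \textbf{Step 0} costs at most $O(dn+dV^2)$, which is dominated by the per-iteration cost and can be absorbed.

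For \textbf{Step 1}, I would analyse a single inner iteration, meaning one application of Eq.~(\ref{eq:update_q}) to all objects followed by one update of $P$.
\begin{itemize}
\item For a fixed object $\textbf{x}_i$ and cluster $C_y$, evaluating $\sum_{r=1}^d\sum_{h=1}^{v^r}\hat{w}^r_{mh}d^r_{mh}\hat{p}^r_{yh}$ reads off entries of $D$ and $W$ in constant time each. It therefore costs $O(dV)$.
\item Doing this for all $k$ clusters and all $n$ objects, including taking the $\arg\min$, gives $O(kdVn)$.
\item Updating $P$ from the new $\textbf{Q}$ means counting the frequencies $f^r_{lh}$ in one scan of the data, $O(dn)$, and normalising the $kdV$ entries, $O(kdV)$. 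Both are dominated by $O(kdVn)$.
\end{itemize}
With $I$ inner iterations, \textbf{Step 1} costs $O(kdVnI)$. The test $\textbf{Q}^{(\tau+1)}\neq\textbf{Q}^{(\tau)}$ costs $O(n)$ if $\textbf{Q}$ is stored as a label vector.

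For \textbf{Step 2}, the weights are given by Eq.~(\ref{eq:novel_update}).
\begin{itemize}
\item The counts $f^r_{lm}$ and $f^r_m$ are obtained in one $O(dn)$ scan; this is the $dn$ term.
\item The double sum over cluster pairs in Eq.~(\ref{eq:novel_update}) naively takes $O(k^2)$ per pair $(m,h)$. Instead I would use the identity from the proof of Lemma~\ref{lem:constant}:
\[
\sum_{s=1}^{k-1}\sum_{u=s+1}^{k}\bigl(f^r_{sm}f^r_{uh}+f^r_{um}f^r_{sh}\bigr)=f^r_mf^r_h-\sum_{l=1}^kf^r_{lm}f^r_{lh}.
\]
This makes each weight computable in $O(k)$. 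Over all $O(V^2)$ value pairs in each of the $d$ attributes, the cost is $O(dV^2k)$.
\item The normalisation to satisfy the constraint in Eq.~(\ref{eq:objective}) adds only $O(dV^2)$.
\end{itemize}
Hence \textbf{Step 2} costs $O(dn+dV^2k)$.

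Adding the two steps gives $O(kdVnI+dn+dV^2k)$ per outer iteration, and multiplying by $E$ yields the claimed bound. The main point needing care is the $O(k)$ per-weight evaluation in \textbf{Step 2}: without the identity from Lemma~\ref{lem:constant}, the bound would carry a $k^2$ factor. I would state that identity explicitly in the proof. I would also justify that every distance and weight is read off in constant time from the precomputed $D$ and $W$.
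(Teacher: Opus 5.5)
Your proposal is correct and matches the paper's proof essentially step for step. \textbf{Step 1} costs $O(kdV)$ per row of $\textbf{Q}$, hence $O(kdVnI)$; \textbf{Step 2} uses an $O(dn)$ scan to build the cluster-wise frequencies plus the identity from Lemma~\ref{lem:constant} (each weight via $f^r_mf^r_h-\sum_{l}f^r_{lm}f^r_{lh}$ in $O(k)$), hence $O(dn+dV^2k)$. The extra bookkeeping you include (updating $P$, the soft-max normalisation, the $\textbf{Q}$ comparison, \textbf{Step 0}) is left implicit in the paper and is indeed dominated. The only nit is that filling the $kd$ vectors of $P$ at initialisation costs $O(kdV)$, which is not in general within your stated $O(dn+dV^2)$, though it is still absorbed by the $O(kdVnI)$ term.
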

\begin{prf}
In $\textbf{Step 1}$, time complexity for computing the values of a row of $\textbf{Q}^{(\tau)}$ is $O(kdV)$ because there are $k$ clusters to be considered, and for each cluster, the distance is computed based on the $d$ intra-attribute distances stored in $D$, and each attribute has a maximum of $V$ possible values. See the proof of Theorem~\ref{the:solve_p1} for more details of the computing of $\textbf{Q}^{(\tau)}$. Since there are $n$ rows in $\textbf{Q}^{(\tau)}$ and $\textbf{Step 1}$ repeats $I$ times, the total time complexity of $\textbf{Step 1}$ is $O(kdVnI)$.

According to the proof of Lemma~\ref{lem:constant}, the term $\sum_{s=1}^{k-1}\sum_{u=s+1}^k(f^r_{sm}f^r_{uh}+f^r_{um}f^r_{sh})/(f^r_mf^r_h)$ in Eq.~(\ref{eq:novel_update}) can be directly computed by $1-\sum_{l=1}^kf^r_{lm}f^r_{lh}/(f^r_mf^r_h)$. Before the computation, we should first obtain the set of occurrence frequency matrices $F=\{\textbf{F}^1,\textbf{F}^2,...,\textbf{F}^d\}$ where $\textbf{F}^r$ is a $k\times v^r$ matrix storing the occurrence frequencies of $A^r$'s possible values in each cluster, and the $(l,m)$th entry of $\textbf{F}^r$ is $f^r_{lm}$. To obtain $F$, the $d$ values of each data object $\textbf{x}_i$ should be scanned once according to the corresponding $q_{il}=1$ in $\textbf{Q}$. Since there are $n$ objects in total, the time complexity for obtaining $F$ is $O(dn)$. It is therefore clear that the time complexity for computing the $dV(V-1)/2$ weights according to each of the $k$ clusters using Eq.~(\ref{eq:novel_update}) is $O(dn+dV^2k)$ in $\textbf{Step 2}$.

Since the loop of $\textbf{Step 1}$ and $\textbf{2}$ repeats $E$ times, the time complexity of HD-NDW is $O(E(kdVnI+dn+dV^2k))$.
\qed
\end{prf}

\section{Experiments}\label{sct:experiments}

We conduct a series of experiments on various benchmark and real data sets to evaluate the proposed clustering method. We first describe the experimental settings. Then, we demonstrate and discuss the experimental results.

\subsection{Experimental Settings}\label{subsct:ex_settings}

\subsubsection{Experimental Design}\label{subsubsct:ex_design}

Five experiments are designed as follows:
\begin{itemize}
\item \textit{Clustering Performance of HD-NDW.} We compare HD-NDW with various clustering algorithms on mixed, ordinal, and nominal categorical data sets to illustrate the superiority of HD-NDW.
\item \textit{Effectiveness of HD.} HD is a core component of HD-NDW. We compare HD and various distance measures by combining them with the simplest $k$-modes clustering algorithm to illustrate the effectiveness of HD.
\item \textit{Effectiveness of NDW.} NDW is also a core component of HD-NDW. We compare HD-NDW and its non-weighting version to prove the effectiveness NDW.
\item \textit{Convergence Evaluation.} Convergence curves of HD-NDW on various data sets are demonstrated to illustrate its effectiveness and fast convergence.
\item \textit{Computational Efficiency Evaluation.} We compare the execution time of various clustering methods on synthetic data sets to illustrate the efficiency of HD-NDW.
\end{itemize}
For all the experiments, the number $k$ of the clusters is set at the true number $k^*$ of the clusters according to the data label. We run all the experiments 50 times and report the average results.

\subsubsection{Validity Indices}\label{subsubsct:ex_indices}

We select the commonly used Adjusted Rand Index (ARI) \cite{ex2} because it is powerful in discriminating clustering performance \cite{ex3,ex4}. Normalized Mutual Information (NMI) \cite{ex5}\cite{jdm} is selected to evaluate clustering performance from the perspective of information theory \cite{intro9}. To make the evaluation comprehensive, the traditional Clustering Accuracy (CA) \cite{ex0,ex1} is also selected. NMI and CA are in the interval $[0,1]$ and ARI is in the interval $[-1,1]$. For all these selected validity indices, a higher value indicates a better clustering performance. We also adopt Wilcoxon signed-rank test and Bonferroni-Dunn test \cite{r1test} to evaluate the statistical significance of the difference between clustering performance of different methods. In addition, we compute the averaged Intra- and Inter-Cluster Distance (ICD for short) \cite{adm} to intuitively demonstrate the cluster discrimination ability of different methods.

\subsubsection{Counterpart Selection}\label{subsubsct:ex_counterpart}

The most representative partitional clustering algorithms are selected as counterparts for the experiments. We select $k$-modes (KMD) \cite{kmd} because it is the most conventional one. We select Entropy-based Categorical data Clustering (ECC) \cite{ecc} because it is conventional and representative among the entropy-based clustering algorithms. We also select attribute Weighting $k$-modes (WKM) \cite{wkm}, Mixed-attribute Weighting $k$-modes (MWKM) \cite{mwkm}, and attribute Weighting and Object-cluster-similarity-based Clustering (WOC) \cite{woc} algorithms as another three counterparts. WKM and MWKM are two representative algorithms in the attribute-weighting clustering stream, and WOC is the most state-of-the-art one that extends the attribute weighting into subspace. Space structure-Based Clustering (SBC) \cite{r1compare1}, Coupled Data Embedding-based clustering (CDE) \cite{r1compare3}, and UNsupervised heTerogeneous couplIng lEarning-based clustering (UNTIE) \cite{r1compare4} are also chosen as the counterparts in the stream of data representation-based clustering. SBC has two versions, denoted as SBC-1 and SBC-2, whose difference is to adopt the different distance functions only. For simplicity, we just therefore report the performance of the one with better performance on each data set. Also, the state-of-the-art Distance Learning-based Clustering (DLC) \cite{dlc} algorithm is selected. Since it is designed for ordinal data only, we first perform the `simple coding' as discussed in Section~\ref{subsct:related_measure} to encode the nominal attributes of mixed data sets, and then perform DLC for clustering.

We select categorical data distance measures as counterparts of the proposed HD distance metric. We select Hamming distance metric \cite{hdm} because it is the most commonly used one in categorical data clustering. We also select Lin's Similarity Measure (LSM) \cite{lsm} as a representative for the stream of entropy-based measures, and Context-Based Distance Metric (CBDM) \cite{cbdmjournal} as a representative for the stream of context-based metrics. We also select three state-of-the-art categorical data distance metrics, i.e., Jia's Distance Metric (JDM) \cite{jdm}, Entropy-Based Distance Metric (EBDM) \cite{udm}, and Coupled Metric Similarity (CMS) \cite{r1compare2} as counterparts. We set the parameters of the above-mentioned counterparts at the values suggested by the corresponding papers.

\subsubsection{Data Sets}\label{subsubsct:ex_data}

We collect 15 data sets for the experiments, and the data statistics are shown in Table \ref{tb:statistics}.
\begin{table}[t]
\caption{Statistics of the 15 utilized data sets. ``\# Attribute'' of mixed categorical data sets indicates ``\# ordinal attributes + \# nominal attributes''}
\label{tb:statistics}
\centering
\begin{tabular}{l|l|r|r|r}
\toprule
Data type& Data set    & \# Instance  & \# Attribute & \# Class  \\
\midrule
\multirow{6}{*}{Mixed}
&Lenses   & 24       & 2+2          & 12        \\
&Assistant& 72       & 2+2          & 3         \\
&Hayes    & 132      & 2+2          & 3         \\
&Lym      & 148      & 3+15         & 4         \\
&Cancer   & 286      & 4+5          & 2         \\
&Nursery  & 12,960   & 7+1          & 4         \\
\midrule
\multirow{5}{*}{Ordinal}
&Photo    & 66       & 4          & 3         \\
&Selection& 488      & 4          & 9         \\
&Lecturer & 1,000    & 4          & 5         \\
&Social   & 1,000    & 10         & 4         \\
&Car      & 1,728    & 7          & 4         \\
\midrule
\multirow{4}{*}{Nominal}
&Soybean  & 47       & 21         & 4         \\
&Zoo      & 101      & 16         & 7         \\
&Solar    & 323      & 9          & 6         \\
&Voting   & 435      & 16         & 2         \\
\bottomrule
\end{tabular}
\end{table}
Among the six mixed categorical data sets (mixed data sets for short), Lenses, Breast Cancer (abbreviated as Cancer), Hayes-Roth (abbreviated as Hayes), Lymphography (abbreviated as Lym), and Nursery, are benchmark data sets collected from the UCI Machine Learning Repository (UCI-MLR)\footnote{http://archive.ics.uci.edu/ml/datasets.html\label{uci}} \cite{uci}, Assistant Evaluation (abbreviated as Assistant) is a real mixed categorical data set collected from university questionnaires. Among the five ordinal data sets, Lecturer Evaluation (abbreviated as Lecturer), Social Works (abbreviated as Social), and Employee Selection (abbreviated as Selection) are benchmark data sets collected from the Weka website\footnote{https://www.cs.waikato.ac.nz/ml/weka/datasets.html} \cite{weka}, Photo Evaluation (abbreviated as Photo) is a real ordinal data set collected from university questionnaires, and Car Evaluation (abbreviated as Car) is a benchmark data set collected from UCI-MLR. For all these five ordinal data sets, monotonic correlation exists among all the attributes, i.e., an object composed of higher ranked values always ranks higher in comparison with the other objects composed of lower ranked values \cite{ex6}. To utilize such known monotonicity, the original object-cluster distance is replaced with $\text{dist}(\textbf{x}_i,C_l)=|\text{dist}(\textbf{x}_i,\textbf{x}_0)-\text{dist}(\textbf{x}_i,\textbf{x}_0)|$ for the measures (i.e., LSM, EBDM, DLC, and HD) that are capable in distinguishing the order of values, in conducting the clustering experiment in Section~\ref{subsct:ex_clustering}. Note that $\textbf{x}_0$ here is a constructed object composed of the lowest ranked value of each attribute. All the four nominal data sets, i.e., Solar Flare (abbreviated as Solar), Zoo, Voting Records (abbreviated as Voting), and Soybean, are benchmark data sets collected from UCI-MLR.

\subsubsection{Initialization of HD-NDW}\label{subsubsct:ex_initialize}

In $\textbf{Step 0}$ of the proposed HD-NDW algorithm, values of $P$ and $W$ should be initialized. For $P$, although different initialization strategies can be utilized, we adopt a strategy similar to the random initialization of the conventional $k$-modes algorithm. That is, we randomly select $k^*$ objects as modes, and then assign values to the $k^*\times d$ vectors of $P$ accordingly. Taking the data set shown in Table~\ref{tb:example_mix_data} as an example, suppose we have $o^1_1=\uparrow$, $o^1_2=\uparrow\uparrow$, $o^1_3=\uparrow\uparrow\uparrow$, $o^1_4=\uparrow\uparrow\uparrow\uparrow$, $o^2_1=$non-special, $o^2_2=$vesicles, and $o^2_3=$chalices. If the 6th object in Table~\ref{tb:example_mix_data} (i.e. $\textbf{x}_6=[\uparrow\uparrow\uparrow\uparrow, \text{vesicles}]^\top$) is initialized as the mode of the 2nd cluster, then the corresponding two vectors in $P_2$ will be $\textbf{p}_2^1=[0,0,0,1]^\top$ and $\textbf{p}_2^2=[0,1,0]^\top$, respectively. For $W$, we uniformly initialize each weight of it to $1/(\sum_{r=1}^d v^r(v^r-1)/2)$. In this way, the sum of all the initialized weights equals to 1, which is equal to the sum of the weights after updating, as the updated weights will be processed using soft-max (see the discussions following Eq.~(\ref{eq:novel_update}) for more details). Another purpose of such a uniform initialization is to make the initialized weights have no effect on the learning of $\textbf{Step 1}$ in Algorithm~\ref{alg:algorithm}. If we randomly initialize $W$, inappropriate distance weights will prevent $\textbf{Step 1}$ from learning reasonable data partition, which will further influence the subsequent learning iterations.

\subsection{Clustering Performance Evaluation of HD-NDW}\label{subsct:ex_clustering}

\begin{table*}[t]
\caption{Clustering performance of various clustering algorithms on mixed and ordinal categorical data sets. The column of `$\Delta$' reports the improvements achieved by HD-NDW in comparison with the best-performing counterparts on different data sets. Results of significance tests are shown in Table~\ref{tb:sigalgmixord} and Fig.~\ref{fig:sig_bd}.}
\label{tb:comalgmixord}
\centering
\resizebox{2.04\columnwidth}{!}{
\begin{tabular}{c|l|ccccccccccr}
\toprule
Index & Data Set & KMD & ECC & WKM & MWKM & SBC & WOC & CDE & UNTIE & DLC & HD-NDW & $\Delta$\ \ \ \ \ \\
\midrule
\midrule
\multirow{11}{*}{ARI}
& Assistant& 0.111$\pm$0.06	& 0.133$\pm$0.09	& 0.113$\pm$0.08	& 0.138$\pm$0.09	& 0.153$\pm$0.04	& \underline{0.194$\pm$0.08}	& 0.131$\pm$0.06	& 0.152$\pm$0.04	& 0.152$\pm$0.09	& \textbf{0.330$\pm$0.05} &70.1\%\\	
& Lenses& 0.088$\pm$0.13	& 0.104$\pm$0.14	& 0.087$\pm$0.17	& 0.124$\pm$0.13	& \underline{0.148$\pm$0.11}	& 0.117$\pm$0.16	& 0.085$\pm$0.15	& 0.088$\pm$0.13	& 0.146$\pm$0.10	& \textbf{0.227$\pm$0.21} &53.4\%\\	
& Cancer& 0.018$\pm$0.05	& 0.050$\pm$0.07	& 0.014$\pm$0.04	& 0.056$\pm$0.06	& 0.083$\pm$0.08	& 0.076$\pm$0.07	& 0.083$\pm$0.08	& \underline{0.085$\pm$0.11}	& 0.035$\pm$0.05	& \textbf{0.090$\pm$0.10} &6.5\%\\	
& Hayes& -0.001$\pm$0.03	& 0.017$\pm$0.05	& 0.020$\pm$0.02	& 0.016$\pm$0.01	& -0.012$\pm$0.01	& 0.019$\pm$0.04	& 0.081$\pm$0.04	& \underline{0.084$\pm$0.06}	& 0.026$\pm$0.03	& \textbf{0.091$\pm$0.03} &8.8\%\\	
& Lym& 0.108$\pm$0.04	& 0.194$\pm$0.04	& 0.075$\pm$0.05	& 0.131$\pm$0.05	& 0.127$\pm$0.07	& 0.163$\pm$0.06	& 0.193$\pm$0.03	& \underline{0.197$\pm$0.05}	& \textbf{0.200$\pm$0.06}	& 0.195$\pm$0.03 &-2.4\%\\	
& Nursery& 0.054$\pm$0.02	& 0.072$\pm$0.10	& 0.083$\pm$0.11	& 0.058$\pm$0.02	& 0.017$\pm$0.01	& 0.002$\pm$0.00	& 0.053$\pm$0.02	& 0.084$\pm$0.02	& \underline{0.115$\pm$0.08}	& \textbf{0.133$\pm$0.07} &15.8\%\\	
& Photo& 0.102$\pm$0.06	& 0.121$\pm$0.09	& 0.100$\pm$0.08	& 0.140$\pm$0.09	& 0.186$\pm$0.05	& 0.158$\pm$0.09	& 0.115$\pm$0.07	& 0.115$\pm$0.09	& \underline{0.267$\pm$0.07}	& \textbf{0.318$\pm$0.06} &19.3\%\\	
& Lecturer& 0.034$\pm$0.02	& 0.035$\pm$0.02	& 0.032$\pm$0.02	& 0.038$\pm$0.01	& 0.046$\pm$0.01	& 0.040$\pm$0.02	& 0.034$\pm$0.02	& 0.033$\pm$0.02	& \underline{0.151$\pm$0.01}	& \textbf{0.154$\pm$0.01} &1.5\%\\	
& Social& 0.043$\pm$0.02	& 0.059$\pm$0.02	& 0.043$\pm$0.01	& 0.047$\pm$0.02	& 0.093$\pm$0.02	& 0.036$\pm$0.02	& 0.068$\pm$0.02	& 0.071$\pm$0.02	& \underline{0.108$\pm$0.00}	& \textbf{0.112$\pm$0.01} &3.2\%\\	
& Selection& 0.151$\pm$0.04	& 0.181$\pm$0.03	& 0.173$\pm$0.03	& 0.171$\pm$0.03	& 0.200$\pm$0.01	& 0.183$\pm$0.04	& 0.219$\pm$0.03	& 0.221$\pm$0.03	& \underline{0.313$\pm$0.01}	& \textbf{0.328$\pm$0.02} &5.1\%\\	
& Car& 0.025$\pm$0.04	& 0.058$\pm$0.05	& 0.026$\pm$0.04	& 0.031$\pm$0.02	& 0.027$\pm$0.03	& 0.035$\pm$0.03	& 0.019$\pm$0.05	& 0.023$\pm$0.06	& \underline{0.112$\pm$0.02}	& \textbf{0.128$\pm$0.04} &14.5\%\\	
\midrule	
\multicolumn{2}{c|}{Averaged Rank}	& 8.55	& 5.82	& 8.18	& 6.18	& 5.09	& 5.64	& 6.45	& 4.91	& 3.00	& \textbf{1.18}	\\						
\midrule
\multirow{11}{*}{NMI}
& Assistant& 0.152$\pm$0.07	& 0.182$\pm$0.10	& 0.160$\pm$0.10	& 0.172$\pm$0.10	& 0.184$\pm$0.05	& \underline{0.262$\pm$0.09}	& 0.159$\pm$0.07	& 0.188$\pm$0.06	& 0.212$\pm$0.11	& \textbf{0.390$\pm$0.04} &48.8\%\\	
& Lenses& 0.227$\pm$0.10	& 0.255$\pm$0.14	& 0.199$\pm$0.18	& 0.276$\pm$0.12	& 0.305$\pm$0.07	& 0.262$\pm$0.14	& 0.203$\pm$0.14	& 0.213$\pm$0.13	& \underline{0.308$\pm$0.09}	& \textbf{0.342$\pm$0.16} &11.3\%\\	
& Cancer& 0.011$\pm$0.02	& 0.029$\pm$0.04	& 0.008$\pm$0.02	& 0.024$\pm$0.03	& 0.040$\pm$0.03	& 0.034$\pm$0.03	& 0.045$\pm$0.04	& \underline{0.046$\pm$0.05}	& 0.014$\pm$0.02	& \textbf{0.062$\pm$0.03} &37.1\%\\	
& Hayes& 0.019$\pm$0.04	& 0.032$\pm$0.05	& 0.026$\pm$0.02	& 0.033$\pm$0.03	& 0.003$\pm$0.01	& 0.043$\pm$0.06	& \underline{0.087$\pm$0.03}	& 0.086$\pm$0.05	& 0.032$\pm$0.03	& \textbf{0.103$\pm$0.03} &18.3\%\\	
& Lym& 0.168$\pm$0.04	& 0.243$\pm$0.04	& 0.130$\pm$0.05	& 0.188$\pm$0.05	& 0.170$\pm$0.04	& 0.231$\pm$0.06	& 0.237$\pm$0.04	& \underline{0.243$\pm$0.05}	& 0.223$\pm$0.05	& \textbf{0.258$\pm$0.03} &5.9\%\\	
& Nursery& 0.059$\pm$0.02	& 0.103$\pm$0.13	& 0.105$\pm$0.13	& 0.103$\pm$0.03	& 0.032$\pm$0.02	& 0.006$\pm$0.00	& 0.056$\pm$0.02	& 0.101$\pm$0.03	& \underline{0.117$\pm$0.11}	& \textbf{0.162$\pm$0.09} &39.2\%\\	
& Photo& 0.143$\pm$0.06	& 0.177$\pm$0.09	& 0.151$\pm$0.10	& 0.180$\pm$0.10	& 0.221$\pm$0.05	& 0.222$\pm$0.11	& 0.181$\pm$0.08	& 0.200$\pm$0.10	& \underline{0.339$\pm$0.03}	& \textbf{0.373$\pm$0.03} &10.1\%\\	
& Lecturer& 0.054$\pm$0.02	& 0.059$\pm$0.02	& 0.057$\pm$0.02	& 0.060$\pm$0.02	& 0.073$\pm$0.02	& 0.064$\pm$0.03	& 0.056$\pm$0.02	& 0.059$\pm$0.02	& \underline{0.215$\pm$0.01}	& \textbf{0.217$\pm$0.01} &0.8\%\\	
& Social& 0.065$\pm$0.02	& 0.086$\pm$0.02	& 0.060$\pm$0.02	& 0.068$\pm$0.02	& 0.131$\pm$0.02	& 0.059$\pm$0.02	& 0.094$\pm$0.02	& 0.088$\pm$0.01	& \underline{0.167$\pm$0.00}	& \textbf{0.168$\pm$0.01} &0.2\%\\	
& Selection& 0.280$\pm$0.04	& 0.335$\pm$0.03	& 0.305$\pm$0.03	& 0.308$\pm$0.02	& 0.353$\pm$0.01	& 0.307$\pm$0.04	& 0.370$\pm$0.02	& 0.368$\pm$0.02	& \underline{0.491$\pm$0.01}	& \textbf{0.510$\pm$0.01} &3.7\%\\	
& Car& 0.047$\pm$0.02	& 0.121$\pm$0.07	& 0.062$\pm$0.05	& 0.064$\pm$0.03	& 0.071$\pm$0.04	& 0.079$\pm$0.06	& 0.091$\pm$0.07	& 0.106$\pm$0.06	& \underline{0.219$\pm$0.01}	& \textbf{0.228$\pm$0.01} &3.9\%\\	
\midrule	
\multicolumn{2}{c|}{Averaged Rank}	& 9.00	& 5.55	& 8.45	& 6.27	& 5.55	& 5.64	& 5.64	& 4.55	& 3.36	& \textbf{1.00}	\\		
\midrule
\multirow{11}{*}{CA}
& Assistant& 0.522$\pm$0.07	& 0.536$\pm$0.08	& 0.527$\pm$0.09	& 0.546$\pm$0.09	& 0.568$\pm$0.08	& \underline{0.621$\pm$0.07}	& 0.531$\pm$0.06	& 0.549$\pm$0.05	& 0.570$\pm$0.09	& \textbf{0.639$\pm$0.07} &2.9\%\\	
& Lenses& 0.534$\pm$0.09	& 0.537$\pm$0.11	& 0.538$\pm$0.10	& 0.557$\pm$0.10	& \underline{0.564$\pm$0.09}	& 0.544$\pm$0.11	& 0.512$\pm$0.10	& 0.513$\pm$0.07	& 0.561$\pm$0.07	& \textbf{0.588$\pm$0.13} &4.1\%\\	
& Cancer& 0.564$\pm$0.06	& 0.586$\pm$0.09	& 0.536$\pm$0.07	& 0.614$\pm$0.08	& 0.624$\pm$0.11	& 0.629$\pm$0.10	& \underline{0.630$\pm$0.10}	& \underline{0.630$\pm$0.10}	& 0.584$\pm$0.08	& \textbf{0.651$\pm$0.09} &3.4\%\\	
& Hayes& 0.384$\pm$0.03	& 0.414$\pm$0.06	& 0.439$\pm$0.05	& 0.416$\pm$0.02	& 0.354$\pm$0.02	& 0.413$\pm$0.08	& 0.442$\pm$0.05	& \underline{0.452$\pm$0.04}	& 0.446$\pm$0.05	& \textbf{0.487$\pm$0.05} &7.7\%\\	
& Lym& 0.462$\pm$0.05	& 0.512$\pm$0.04	& 0.433$\pm$0.07	& 0.482$\pm$0.06	& 0.505$\pm$0.06	& \underline{0.551$\pm$0.05}	& 0.519$\pm$0.04	& 0.550$\pm$0.05	& 0.538$\pm$0.06	& \textbf{0.601$\pm$0.07} &9.2\%\\	
& Nursery& 0.378$\pm$0.04	& 0.368$\pm$0.07	& 0.395$\pm$0.09	& 0.359$\pm$0.03	& 0.323$\pm$0.03	& 0.292$\pm$0.02	& 0.366$\pm$0.00	& 0.397$\pm$0.01	& \underline{0.404$\pm$0.04}	& \textbf{0.423$\pm$0.06} &4.8\%\\	
& Photo& 0.511$\pm$0.07	& 0.524$\pm$0.08	& 0.517$\pm$0.09	& 0.553$\pm$0.09	& 0.557$\pm$0.05	& 0.584$\pm$0.09	& 0.501$\pm$0.06	& 0.500$\pm$0.09	& \underline{0.668$\pm$0.06}	& \textbf{0.698$\pm$0.05} &4.4\%\\	
& Lecturer& 0.335$\pm$0.03	& 0.328$\pm$0.03	& 0.319$\pm$0.03	& 0.322$\pm$0.03	& 0.339$\pm$0.02	& 0.331$\pm$0.04	& 0.319$\pm$0.03	& 0.338$\pm$0.05	& \underline{0.455$\pm$0.04}	& \textbf{0.465$\pm$0.04} &2.2\%\\	
& Social& 0.370$\pm$0.03	& 0.384$\pm$0.03	& 0.371$\pm$0.02	& 0.372$\pm$0.03	& \underline{0.421$\pm$0.02}	& 0.372$\pm$0.03	& 0.391$\pm$0.02	& 0.409$\pm$0.02	& 0.414$\pm$0.02	& \textbf{0.453$\pm$0.04} &7.6\%\\	
& Selection& 0.365$\pm$0.04	& 0.372$\pm$0.04	& 0.373$\pm$0.03	& 0.369$\pm$0.04	& 0.386$\pm$0.01	& 0.427$\pm$0.05	& 0.407$\pm$0.03	& 0.437$\pm$0.04	& \textbf{0.505$\pm$0.03}	& \underline{0.489$\pm$0.03} &-3.2\%\\	
& Car& 0.370$\pm$0.04	& 0.384$\pm$0.06	& 0.375$\pm$0.07	& 0.369$\pm$0.03	& 0.357$\pm$0.04	& 0.366$\pm$0.05	& 0.389$\pm$0.05	& 0.390$\pm$0.05	& \underline{0.437$\pm$0.04}	& \textbf{0.453$\pm$0.06} &3.5\%\\	
\midrule	
\multicolumn{2}{c|}{Averaged Rank}	& 8.18	& 6.55	& 7.45	& 6.91	& 5.64	& 5.45	& 6.32	& 4.41	& 3.00	& \textbf{1.09}	\\						
\bottomrule
\end{tabular}}
\end{table*}

\begin{table*}[t]
\caption{Clustering performance of various clustering algorithms on nominal data sets. The column of `$\Delta$' reports the improvements achieved by HD-NDW in comparison with the best-performing counterparts on different data sets.}
\label{tb:comalgnom}
\centering
\resizebox{1.9\columnwidth}{!}{
\begin{tabular}{c|l|cccccccccr}
\toprule
Index & Data Set & KMD & ECC & WKM & MWKM &SBC & WOC & CDE & UNTIE & HD-NDW & $\Delta$\ \ \ \ \ \\
\midrule
\multirow{4}{*}{ARI}
& Solar& 0.223$\pm$0.06	& 0.194$\pm$0.06	& 0.132$\pm$0.09	& 0.199$\pm$0.06	& 0.126$\pm$0.03	& 0.229$\pm$0.10	& 0.237$\pm$0.08	& \underline{0.255$\pm$0.10}	& \textbf{0.318$\pm$0.08} &24.8\%\\	
& Zoo& 0.628$\pm$0.18	& 0.530$\pm$0.15	& 0.651$\pm$0.18	& 0.594$\pm$0.18	& 0.413$\pm$0.14	& 0.618$\pm$0.13	& \underline{0.741$\pm$0.11}	& \textbf{0.748$\pm$0.13}	& 0.721$\pm$0.15 &-3.6\%\\	
& Voting& 0.520$\pm$0.02	& 0.544$\pm$0.01	& 0.535$\pm$0.00	& 0.542$\pm$0.01	& \underline{0.560$\pm$0.03}	& 0.537$\pm$0.00	& 0.534$\pm$0.08	& 0.558$\pm$0.07	& \textbf{0.564$\pm$0.00} &0.6\%\\	
& Soybean& 0.688$\pm$0.22	& 0.659$\pm$0.16	& 0.772$\pm$0.21	& 0.740$\pm$0.22	& 0.816$\pm$0.11	& 0.788$\pm$0.21	& \underline{0.821$\pm$0.19}	& \textbf{0.829$\pm$0.17}	& 0.803$\pm$0.21 &-3.1\%\\	
\midrule	
\multicolumn{2}{c|}{Averaged Rank}	& 6.75	& 7.00	& 6.25	& 6.25	& 5.75	& 5.25	& 3.75	& \textbf{1.75}	& 2.25	\\		
\midrule
\multirow{4}{*}{NMI}
& Solar& 0.300$\pm$0.05	& 0.278$\pm$0.06	& 0.218$\pm$0.10	& 0.271$\pm$0.06	& 0.196$\pm$0.03	& 0.331$\pm$0.09	& 0.319$\pm$0.08	& \underline{0.348$\pm$0.10}	& \textbf{0.408$\pm$0.08} &17.2\%\\	
& Zoo& 0.753$\pm$0.09	& 0.700$\pm$0.06	& 0.779$\pm$0.08	& 0.745$\pm$0.08	& 0.595$\pm$0.08	& 0.786$\pm$0.05	& \textbf{0.810$\pm$0.05}	& 0.808$\pm$0.08	& \underline{0.809$\pm$0.08} &-0.1\%\\	
& Voting& 0.448$\pm$0.02	& \underline{0.476$\pm$0.01}	& 0.452$\pm$0.01	& 0.473$\pm$0.01	& 0.473$\pm$0.00	& 0.475$\pm$0.00	& 0.462$\pm$0.07	& 0.458$\pm$0.08	& \textbf{0.489$\pm$0.00} &2.7\%\\	
& Soybean& 0.805$\pm$0.15	& 0.771$\pm$0.10	& 0.849$\pm$0.12	& 0.847$\pm$0.13	& 0.856$\pm$0.06	& 0.885$\pm$0.11	& 0.892$\pm$0.11	& \textbf{0.902$\pm$0.10}	& \underline{0.897$\pm$0.11} &-0.6\%\\	
\midrule	
\multicolumn{2}{c|}{Averaged Rank}	& 7.00	& 6.25	& 6.75	& 6.50	& 6.75	& 3.50	& 3.50	& 3.25	& \textbf{1.50}	\\		
\midrule
\multirow{4}{*}{CA}
& Solar& 0.482$\pm$0.05	& 0.442$\pm$0.05	& 0.400$\pm$0.06	& 0.462$\pm$0.05	& 0.400$\pm$0.04	& 0.483$\pm$0.07	& 0.483$\pm$0.07	& \underline{0.498$\pm$0.06}	& \textbf{0.540$\pm$0.05} &8.3\%\\	
& Zoo& 0.676$\pm$0.13	& 0.623$\pm$0.10	& 0.703$\pm$0.12	& 0.647$\pm$0.13	& 0.554$\pm$0.09	& 0.669$\pm$0.10	& 0.758$\pm$0.08	& \textbf{0.778$\pm$0.09}	& \underline{0.760$\pm$0.10} &-2.4\%\\	
& Voting& 0.861$\pm$0.01	& 0.869$\pm$0.01	& 0.852$\pm$0.00	& 0.868$\pm$0.00	& \underline{0.875$\pm$0.00}	& 0.867$\pm$0.00	& 0.864$\pm$0.04	& 0.872$\pm$0.05	& \textbf{0.876$\pm$0.00} &0.1\%\\	
& Soybean& 0.791$\pm$0.17	& 0.773$\pm$0.14	& 0.837$\pm$0.17	& 0.811$\pm$0.17	& \underline{0.874$\pm$0.10}	& 0.821$\pm$0.17	& 0.874$\pm$0.14	& \textbf{0.876$\pm$0.13}	& 0.849$\pm$0.16 &-3.1\%\\	
\midrule	
\multicolumn{2}{c|}{Averaged Rank}	& 6.50	& 7.00	& 6.75	& 6.25	& 5.25	& 5.50	& 4.00	& \textbf{1.75}	& 2.00	\\						
\bottomrule
\end{tabular}}
\end{table*}

\begin{table}[t]
\caption{Wilcoxon signed-rank test on the performance of HD-NDW vs. DLC and HD-NDW vs. UNTIE. The symbol ``+'' indicates that HD-NDW is significantly different from a certain counterpart for the two-tailed Wilcoxon signed-rank test at confidence interval 99\% (i.e., $\alpha$ = 0.01).}
\label{tb:sigalgmixord}
\centering
\begin{tabular}{c|c|c}
\toprule
\rule{-2.1pt}{9pt}
Index & HD-NDW vs. DLC & HD-NDW vs. UNTIE \\
\midrule
ARI   & +	& + \\	
NMI   & +	& + \\	
CA    & +	& + \\			
\bottomrule
\end{tabular}
\end{table}

\begin{figure}[t]
\newcommand{\mylwd}{0.495}
\newcommand{\mydmwd}{1.7in}
\begin{minipage}{1\linewidth}
  \centerline{\includegraphics[width=3.47in]{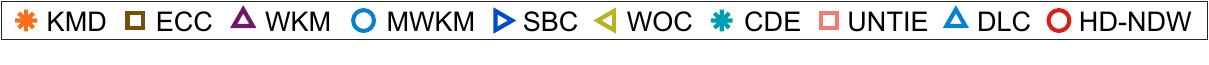}}
\end{minipage}
\vfill
\begin{minipage}{\mylwd\linewidth}
  \centerline{\includegraphics[width=\mydmwd]{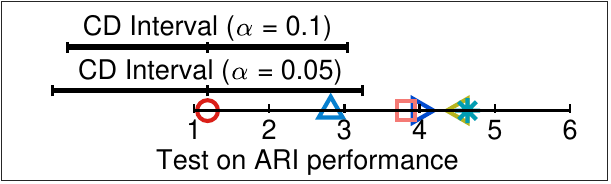}}
\end{minipage}
\hfill
\begin{minipage}{\mylwd\linewidth}
  \centerline{\includegraphics[width=\mydmwd]{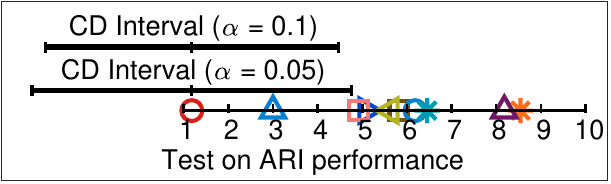}}
\end{minipage}
\vfill
\begin{minipage}{\mylwd\linewidth}
  \centerline{\includegraphics[width=\mydmwd]{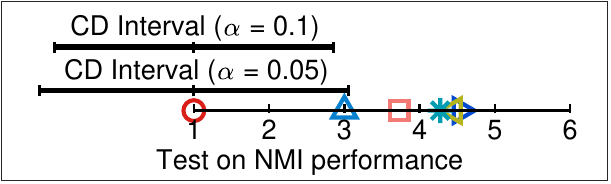}}
\end{minipage}
\hfill
\begin{minipage}{\mylwd\linewidth}
  \centerline{\includegraphics[width=\mydmwd]{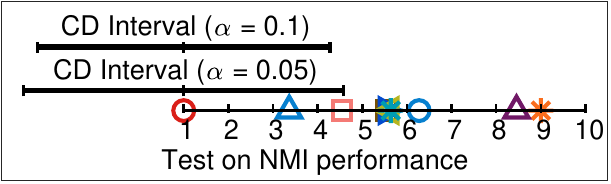}}
\end{minipage}
\vfill
\begin{minipage}{\mylwd\linewidth}
  \centerline{\includegraphics[width=\mydmwd]{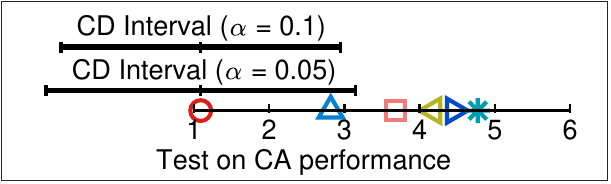}}
  \centerline{\scriptsize{(a) Test for recent-five-year methods.}}
\end{minipage}
\hfill
\begin{minipage}{\mylwd\linewidth}
  \centerline{\includegraphics[width=\mydmwd]{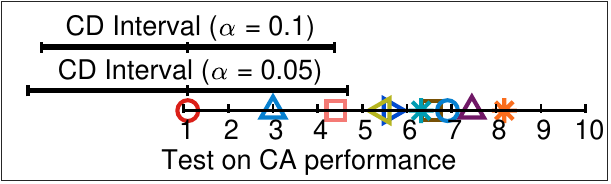}}
  \centerline{\scriptsize{(b) Test for all the compared methods.}}
\end{minipage}
\caption{Bonferroni-Dunn (BD) test on the performance of (a) methods proposed in recent five years, and (b) all the compared methods. Critical Difference (CD) for the two-tailed BD tests in (a) at confidence interval 95\% ($\alpha$ = 0.05) and 90\% ($\alpha$ = 0.1) are 2.05 and 1.86, respectively. CD for the two-tailed BD tests in (b) at confidence interval 95\% ($\alpha$ = 0.05) and 90\% ($\alpha$ = 0.1) are 3.58 and 3.28, respectively. The counterparts rank outside the CD intervals are believed to be significantly different from HD-NDW.}
\label{fig:sig_bd}
\end{figure}

Since a key working principle of HD-NDW is to convert the nominal attributes into ordinal ones for more reasonable distance measurement, the superiority of HD-NDW will be more prominent on mixed and ordinal data sets. In order to conduct a more targeted evaluation, we report the clustering performance on mixed and ordinal data sets in Table~\ref{tb:comalgmixord}. To ensure the completeness of the evaluation, the performance on nominal data sets is reported in Table~\ref{tb:comalgnom}. The best and second-best results are highlighted using boldface and underline, respectively. Improvements achieved by HD-NDW in comparison with the best-performing counterparts on different data sets are reported in the column of `$\Delta$'. For each data set, the compared methods are ranked according to their performance, and the averaged rank of each method is reported. From the results shown in Table~\ref{tb:comalgmixord} and \ref{tb:comalgnom}, we have the following observations:
\begin{itemize}
\item HD-NDW obviously outperforms the other counterparts on mixed categorical data sets, because the homogeneous distance definition and the distance weighting mechanism may have desired effects on mixed categorical data sets.
\item HD-NDW and DLC significantly outperform the other counterparts on ordinal data sets, because they take into account the intra- and inter-attribute order relationship, by which the learned distances are more appropriate for clustering.
\item In the comparison on nominal data sets, superiority of HD-NDW is not as significant as on mixed and ordinal data sets because the HD component that uniformly defines distances for ordinal and nominal attributes will not have desired impact when processing nominal data. Nevertheless, since NDW still acts in booting the clustering performance, HD-NDW is still competitive in comparison with the state-of-the-art UNTIE, and obviously outperforms the others.
\item Although UNTIE is not specially designed for representing data set with ordinal attributes, it still shows strong data representation ability, because it performs the best in comparison with the counterparts except the two methods (i.e., DLC and HD-NDW) that contain specially designed mechanisms for exploiting the information embedded in ordinal attributes. As for the performance on nominal data sets, UNTIE performs the best in general, while HD-NDW is still very competitive.
\end{itemize}

\subsubsection{Significance Test}

According to the averaged rank shown in Table~\ref{tb:comalgmixord}, UNTIE and DLC are clearly the two most competitive counterparts. We conduct significance test using Wilcoxon signed-rank test and report the results in Table~\ref{tb:sigalgmixord}. It can be seen that even at 99\% confidence interval, HD-NDW is still significantly better than the two counterparts in terms of all three validity indices.

To intuitively compare the proposed HD-NDW with the other counterparts, we further perform Bonferroni-Dunn test \cite{r1test} on the performance of different methods and visualize the results in Fig.~\ref{fig:sig_bd}. The counterparts rank outside the Critical Difference (CD) intervals are believed to be significantly different from HD-NDW. It can be observed from Fig.~\ref{fig:sig_bd}(a) that HD-NDW is significantly better than almost all five methods proposed in recent five years at confidence interval 90\%. In comparison with all nine counterparts in Fig.~\ref{fig:sig_bd}(b), HD-NDW is still significantly better than eight counterparts at confidence interval 90\%. Although HD-NDW is not significantly better than DLC, HD-NDW is capable in processing any-type categorical data, while DLC is designed for ordinal data only, which cannot be directly used for mixed data and is incompetent for nominal data. Therefore, HD-NDW also demonstrates superiority in comparison with DLC in terms of availability for nominal data clustering.

\subsubsection{Visualization of Data Representation}

\begin{figure}[t]
\newcommand{\mylwd}{0.325}
\newcommand{\mydmwd}{1.1in}
\begin{minipage}{\mylwd\linewidth}
  \centerline{\includegraphics[width=\mydmwd]{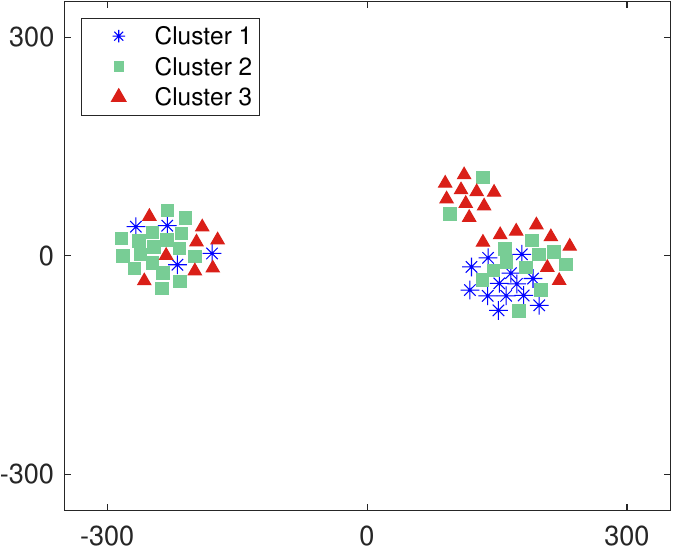}}
  \centerline{\scriptsize{UNTIE Representation}}
\end{minipage}
\hfill
\begin{minipage}{\mylwd\linewidth}
  \centerline{\includegraphics[width=\mydmwd]{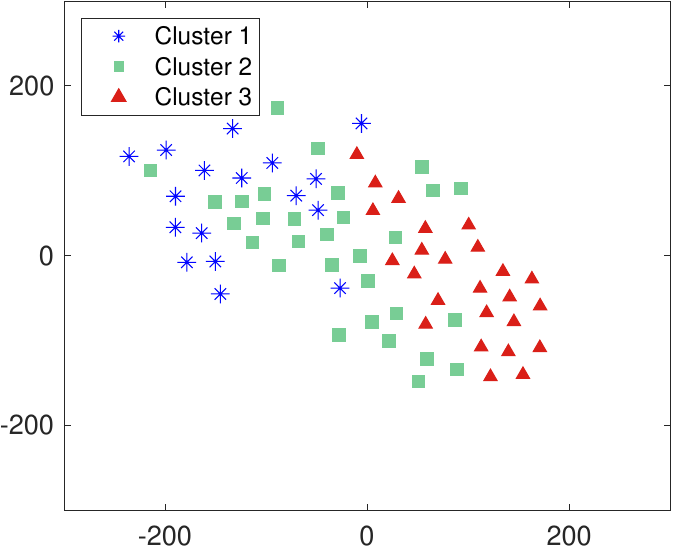}}
  \centerline{\scriptsize{DLC Representation}}
\end{minipage}
\hfill
\begin{minipage}{\mylwd\linewidth}
  \centerline{\includegraphics[width=\mydmwd]{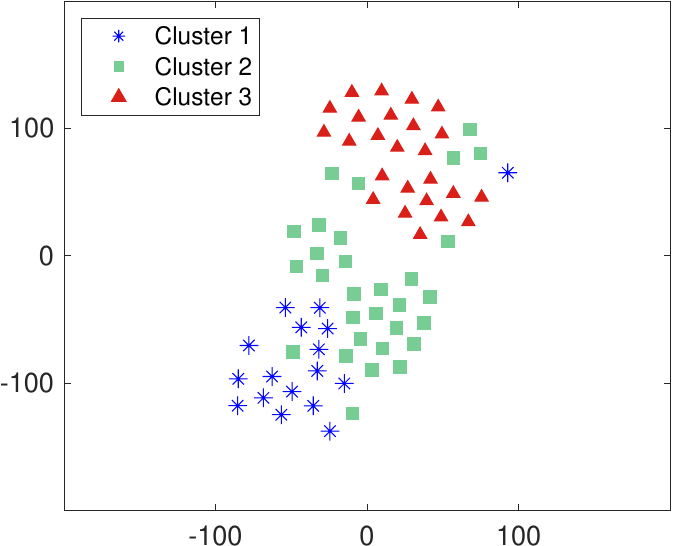}}
  \centerline{\scriptsize{HD-NDW Representation}}
\end{minipage}
\caption{t-SNE visualization of the representations produced by UNTIE, DLC, and HD-NDW on Assistant data set. The three types of markers indicate data objects belonging to different true clusters.}
\label{fig:tsne}
\end{figure}

To intuitively compare the reasonableness of the learned representation or distances of the three best performing methods, i.e., UNTIE, DLC, and HD-NDW, we visualize their representations in Fig.~\ref{fig:tsne} by converting them into two-dimensional points using t-Distributed Stochastic Neighbor Embedding (t-SNE) \cite{r1visual}. Since DLC and HD-NDW are not representation-based methods, we first use them to learn intra-attribute distances, and then encode the data values using the learned distances for representation. Since the distances learned by DLC satisfy $\text{dist}(o_a,o_b)+\text{dist}(o_b,o_c)=\text{dist}(o_a,o_c)$ for $a<b<c$ or $a>b>c$, we directly encode the possible values by $o_1=0$, $o_2=\text{dist}(o_1,o_2)$, $o_3=\text{dist}(o_1,o_3)$, and so on, which will not twist the distances learned by DLC. For the distances learned by HD-NDW, we encode a possible value using the distances between it and all the intra-attribute possible values to preserve the information of the learned distances. For example, for an attribute with possible values \{$o_a$, $o_b$, $o_c$\}, the value $o_b$ is encoded into a vector $[\text{dist}(o_a,o_b),\text{dist}(o_b,o_b),\text{dist}(o_c,o_b)]^\top$ by the HD-NDW learned distance metric. Note that the HD-NDW distance here is the one defined by HD multiplied by the corresponding distance weight learned by HD-NDW.

It can be observed that the true clusters in the HD-NDW-represented data set are obviously more separable in comparison with UNTIE and DLC. The reason should be that Assistant is a mixed categorical data set that is composed of nominal and ordinal attributes. For this kind of data, UNTIE is unable to take into account the order information embedded in ordinal attributes, while DLC is unsuitable for learning distances of nominal attributes.

\subsubsection{Visualization of Cluster Discrimination}

\begin{figure}[t]
\newcommand{\mylwd}{1}
  \centerline{\includegraphics[width=3.45in]{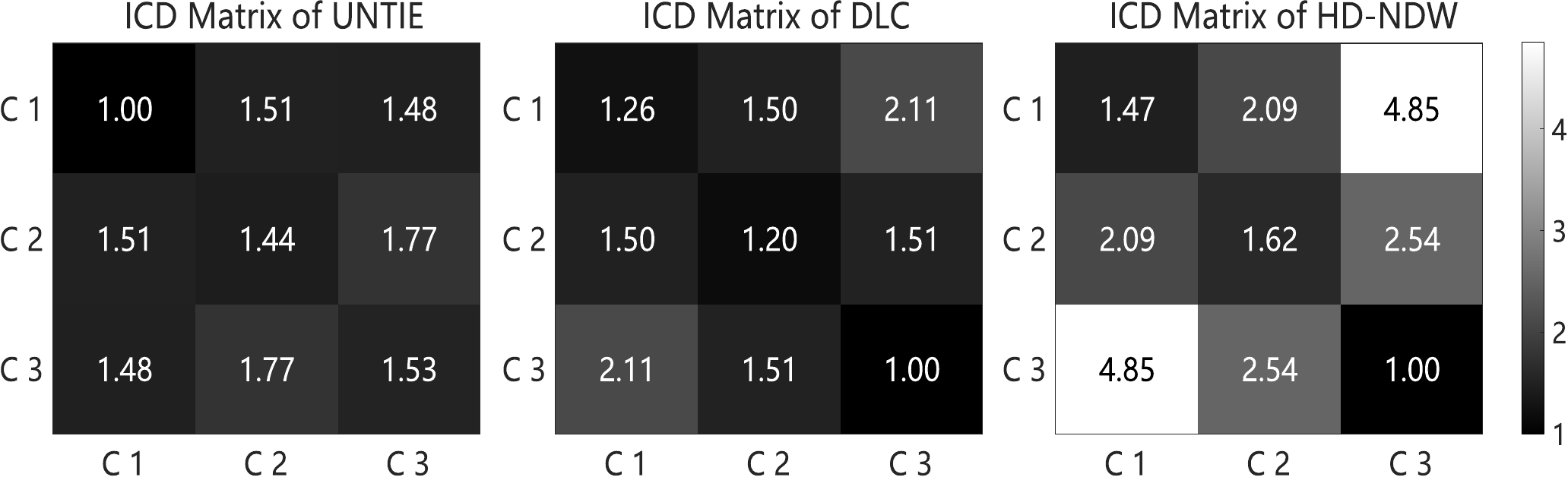}}
\caption{Gray scale maps of the ICD matrices produced by UNTIE, DLC, and HD-NDW on Assistant data set. Darker on the main diagonal and lighter on the other locations indicate a better distance metric.}
\label{fig:icdm}
\end{figure}

Averaged ICD computed based on the true cluster labels of a data set can intuitively indicate the discrimination ability of a distance metric. According to \cite{adm}, averaged ICD between two clusters $C_l$ and $C_t$ with $n_l$ and $n_t$ data objects, respectively, is computed by $\sum_{\textbf{x}_i\in C_l}\sum_{\textbf{x}_j\in C_t}\text{dist}(\textbf{x}_i,\textbf{x}_j)/(n_ln_t)$. When $l=t$, it computes the averaged intra-attribute distance; otherwise, it computes the averaged inter-attribute distance. Since different distance metrics may have different scales, computing the multiple relationship between the averaged intra- and inter-cluster distances is a feasible solution \cite{jdm} to fairly compare the discrimination ability of different metrics. Therefore, we pre-process the ICD matrix of each distance metric by dividing all the values in the matrix by the minimum value in this matrix. Then we visualize the pre-processed ICD matrices as gray scale maps in Fig.~\ref{fig:icdm}. ICD matrix of a better distance metric should be darker on the main diagonal and lighter on the other locations, which indicates smaller averaged intra-cluster distances and larger averaged inter-cluster distances, respectively. From Fig.~\ref{fig:icdm}, it is clear that HD-NDW has better cluster discrimination ability than UNTIE and DLC.

\subsection{Effectiveness Evaluation of HD}\label{subsct:ex_metric}

\begin{figure}[t]
\newcommand{\mylwd}{0.105}
\newcommand{\mydmwd}{0.36in}
\newcommand{\myvsps}{0.1pt}
\newcommand{\myvspshead}{-8pt}
\begin{minipage}{0.05\linewidth}
  \centerline{\tiny $A^1$}
\end{minipage}
\hfill
\begin{minipage}{\mylwd\linewidth}
  \centerline{\tiny Hamming}
  \centerline{\vspace{\myvspshead}}
  \centerline{\includegraphics[width=\mydmwd]{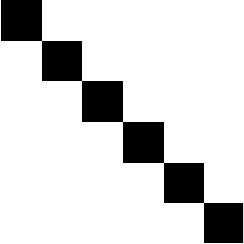}}
  \centerline{\vspace{\myvsps}}
\end{minipage}
\hfill
\begin{minipage}{\mylwd\linewidth}
  \centerline{\tiny LSM}
  \centerline{\vspace{\myvspshead}}
  \centerline{\includegraphics[width=\mydmwd]{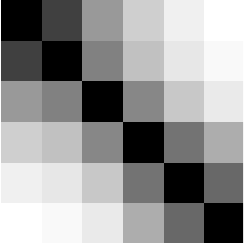}}
  \centerline{\vspace{\myvsps}}
\end{minipage}
\hfill
\begin{minipage}{\mylwd\linewidth}
  \centerline{\tiny CBDM}
  \centerline{\vspace{\myvspshead}}
  \centerline{\includegraphics[width=\mydmwd]{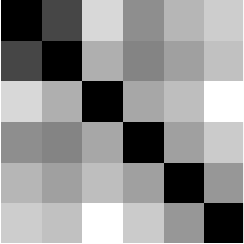}}
  \centerline{\vspace{\myvsps}}
\end{minipage}
\hfill
\begin{minipage}{\mylwd\linewidth}
  \centerline{\tiny CMS}
  \centerline{\vspace{\myvspshead}}
  \centerline{\includegraphics[width=\mydmwd]{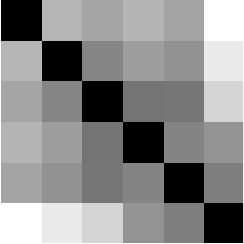}}
  \centerline{\vspace{\myvsps}}
\end{minipage}
\hfill
\begin{minipage}{\mylwd\linewidth}
  \centerline{\tiny EBDM}
  \centerline{\vspace{\myvspshead}}
  \centerline{\includegraphics[width=\mydmwd]{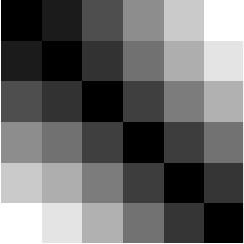}}
  \centerline{\vspace{\myvsps}}
\end{minipage}
\hfill
\begin{minipage}{\mylwd\linewidth}
  \centerline{\tiny DLC}
  \centerline{\vspace{\myvspshead}}
  \centerline{\includegraphics[width=\mydmwd]{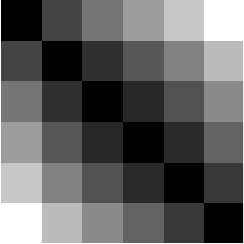}}
  \centerline{\vspace{\myvsps}}
\end{minipage}
\hfill
\begin{minipage}{\mylwd\linewidth}
  \centerline{\tiny HD}
  \centerline{\vspace{\myvspshead}}
  \centerline{\includegraphics[width=\mydmwd]{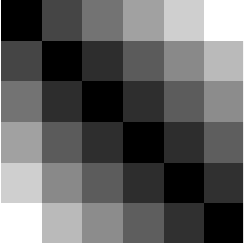}}
  \centerline{\vspace{\myvsps}}
\end{minipage}
\hfill
\begin{minipage}{\mylwd\linewidth}
  \centerline{\tiny HD-NDW}
  \centerline{\vspace{\myvspshead}}
  \centerline{\includegraphics[width=\mydmwd]{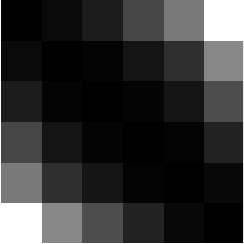}}
  \centerline{\vspace{\myvsps}}
\end{minipage}
\vfill
\begin{minipage}{0.05\linewidth}
  \centerline{\vspace{-13pt}}
  \centerline{\tiny $A^2$}
\end{minipage}
\hfill
\begin{minipage}{\mylwd\linewidth}
  \centerline{\vspace{\myvspshead}}
  \centerline{\includegraphics[width=\mydmwd]{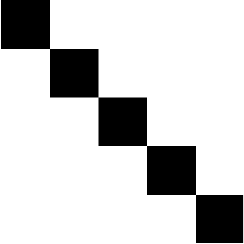}}
\end{minipage}
\hfill
\begin{minipage}{\mylwd\linewidth}
  \centerline{\vspace{\myvspshead}}
  \centerline{\includegraphics[width=\mydmwd]{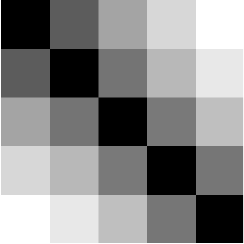}}
\end{minipage}
\hfill
\begin{minipage}{\mylwd\linewidth}
  \centerline{\vspace{\myvspshead}}
  \centerline{\includegraphics[width=\mydmwd]{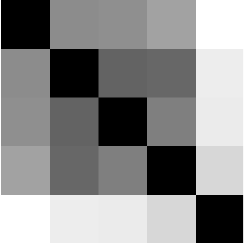}}
\end{minipage}
\hfill
\begin{minipage}{\mylwd\linewidth}
  \centerline{\vspace{\myvspshead}}
  \centerline{\includegraphics[width=\mydmwd]{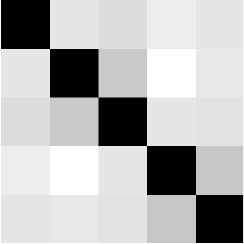}}
\end{minipage}
\hfill
\begin{minipage}{\mylwd\linewidth}
  \centerline{\vspace{\myvspshead}}
  \centerline{\includegraphics[width=\mydmwd]{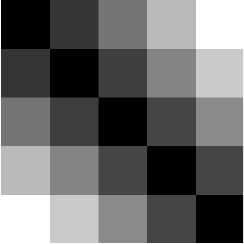}}
\end{minipage}
\hfill
\begin{minipage}{\mylwd\linewidth}
  \centerline{\vspace{\myvspshead}}
  \centerline{\includegraphics[width=\mydmwd]{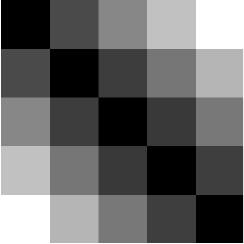}}
\end{minipage}
\hfill
\begin{minipage}{\mylwd\linewidth}
  \centerline{\vspace{\myvspshead}}
  \centerline{\includegraphics[width=\mydmwd]{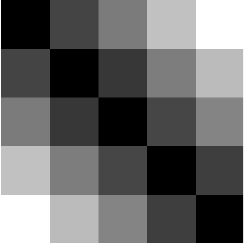}}
\end{minipage}
\hfill
\begin{minipage}{\mylwd\linewidth}
  \centerline{\vspace{\myvspshead}}
  \centerline{\includegraphics[width=\mydmwd]{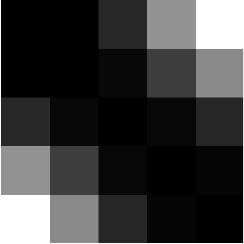}}
\end{minipage}
\caption{Gray scale maps of the intra-attribute distance matrices of the two ordinal attributes of Assistant data set produced by various distance metrics.}
\label{fig:dm_mix}
\end{figure}

In Fig.~\ref{fig:dm_mix}, we visualize the intra-attribute distances produced by different distance measures to intuitively compare them. JDM is not compared in Fig.~\ref{fig:dm_mix} because it directly measures object-cluster distance and does not produce intra-attribute distances. The produced distances are first normalized into the interval [0,1] using min-max scaling, and then the normalized distances are visualized by converting them into corresponding gray scale pixels. A lighter pixel represents a larger distance between two possible values, and a pure black pixel represents a distance value of 0. In Fig.~\ref{fig:dm_mix}, the pixel located at the $m$th column and $h$th row of a gray scale map represents the distance between the $m$th and $h$th possible values of the corresponding attribute. In general, two possible values with larger order difference should have larger distance, and thus the pixels should be darker on the main diagonal, and lighter towards the upper right and lower left corners in the gray scale maps.

It can be observed that Hamming distance is completely incapable in distinguishing the distances between different possible values. Although CBDM and CMS exploits more context information for distance measurement, they cannot reveal the order relationship among possible values of ordinal attributes. Obviously, the distances produced by LSM, EBDM, DLC, HD, and HD-NDW are consistent with the order relationship among possible values of the two ordinal attributes. Since the distances produced by HD-NDW is the weighted version of the distances produced by HD, gray scale maps of HD and HD-NDW are different in Fig.~\ref{fig:dm_mix}, but they both reflect the order relationship.

\begin{figure*}[t]
\newcommand{\mylwd}{0.19}
\newcommand{\mydmwd}{1.2in}
\begin{minipage}{\mylwd\linewidth}
  \centerline{\includegraphics[width=\mydmwd]{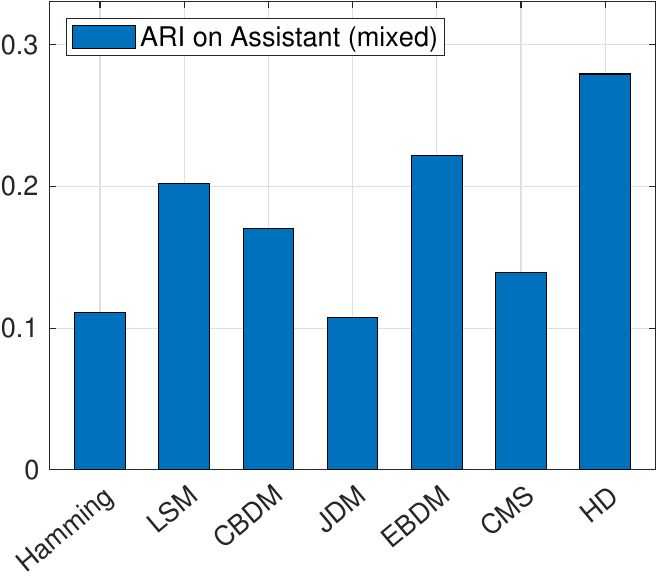}}
\end{minipage}
\hfill
\begin{minipage}{\mylwd\linewidth}
  \centerline{\includegraphics[width=\mydmwd]{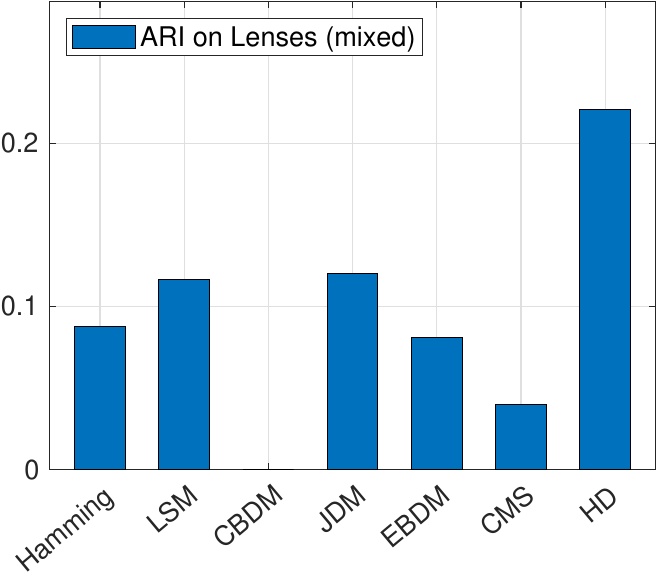}}
\end{minipage}
\hfill
\begin{minipage}{\mylwd\linewidth}
  \centerline{\includegraphics[width=\mydmwd]{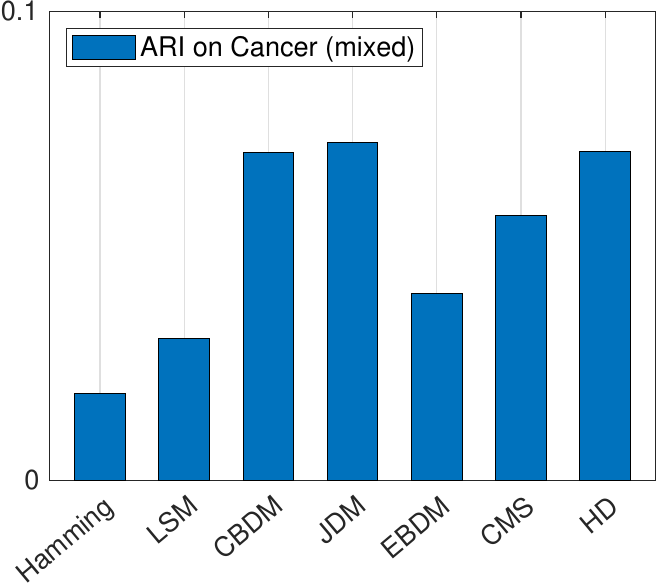}}
\end{minipage}
\hfill
\begin{minipage}{\mylwd\linewidth}
  \centerline{\includegraphics[width=\mydmwd]{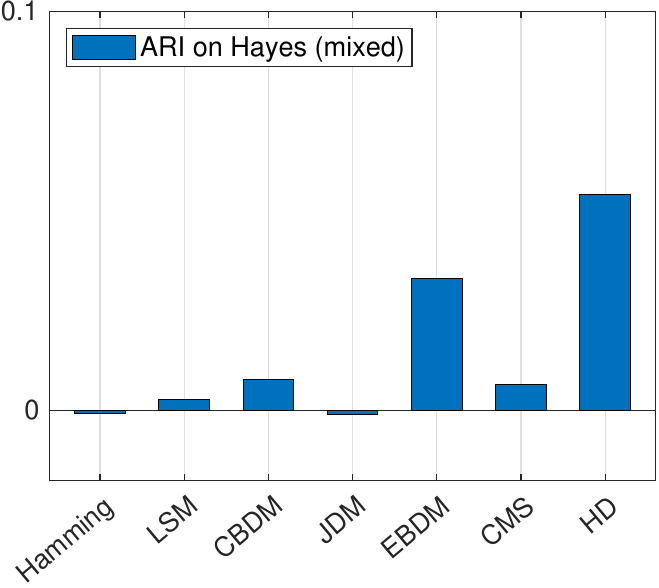}}
\end{minipage}
\hfill
\begin{minipage}{\mylwd\linewidth}
  \centerline{\includegraphics[width=\mydmwd]{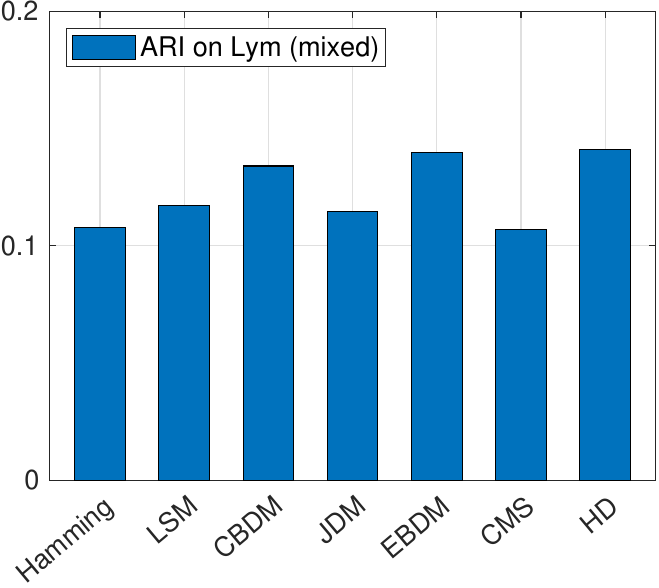}}
\end{minipage}
\vfill
\begin{minipage}{\mylwd\linewidth}
  \centerline{\includegraphics[width=\mydmwd]{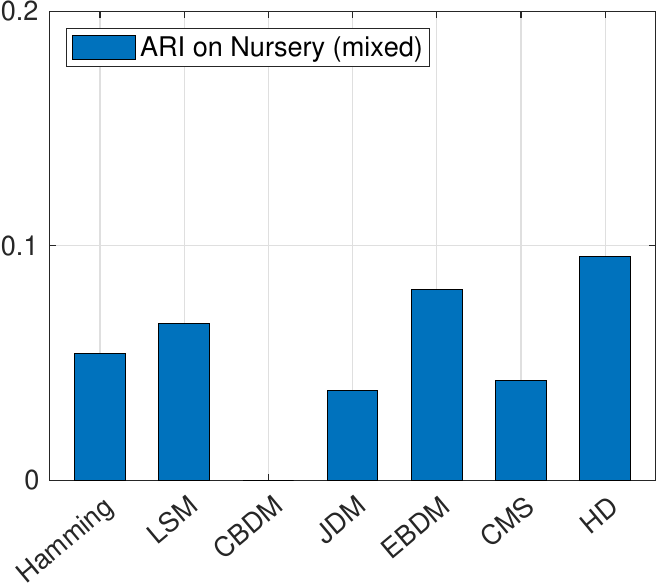}}
\end{minipage}
\hfill
\begin{minipage}{\mylwd\linewidth}
  \centerline{\includegraphics[width=\mydmwd]{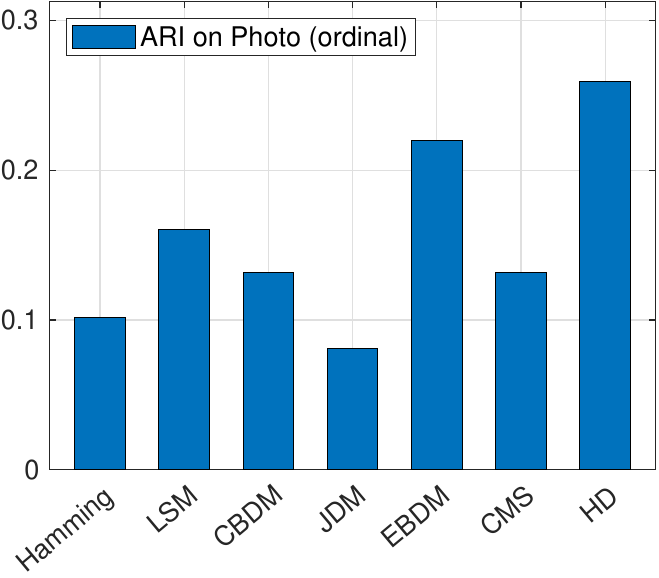}}
\end{minipage}
\hfill
\begin{minipage}{\mylwd\linewidth}
  \centerline{\includegraphics[width=\mydmwd]{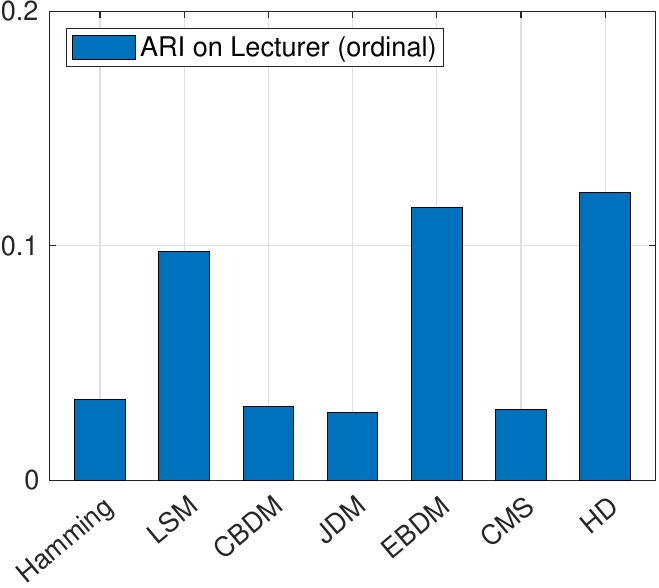}}
\end{minipage}
\hfill
\begin{minipage}{\mylwd\linewidth}
  \centerline{\includegraphics[width=\mydmwd]{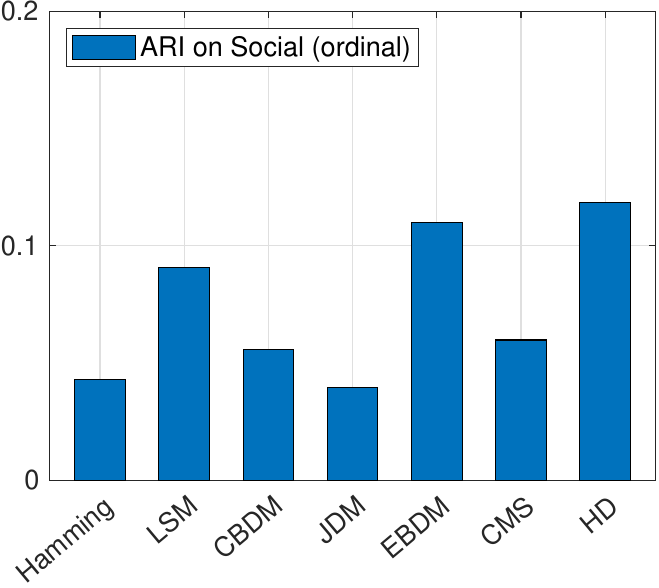}}
\end{minipage}
\hfill
\begin{minipage}{\mylwd\linewidth}
  \centerline{\includegraphics[width=\mydmwd]{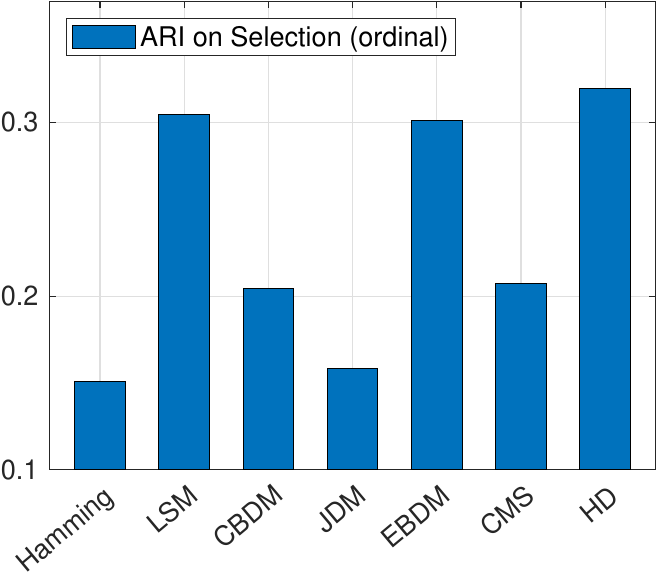}}
\end{minipage}
\vfill
\begin{minipage}{\mylwd\linewidth}
  \centerline{\includegraphics[width=\mydmwd]{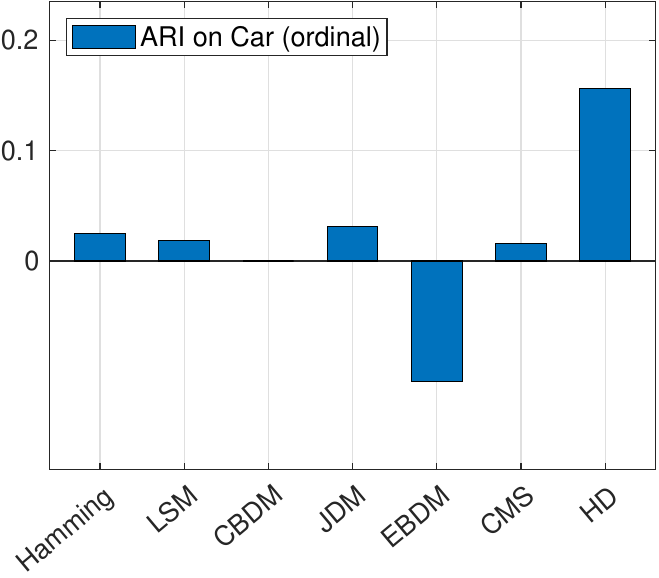}}
\end{minipage}
\hfill
\begin{minipage}{\mylwd\linewidth}
  \centerline{\includegraphics[width=\mydmwd]{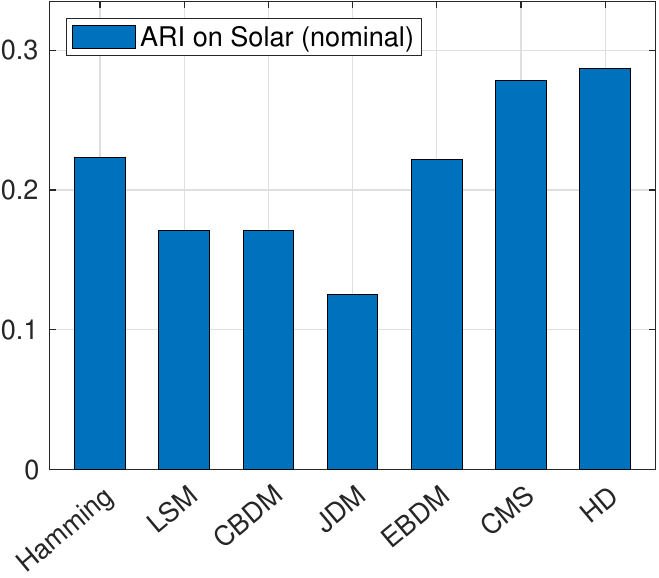}}
\end{minipage}
\hfill
\begin{minipage}{\mylwd\linewidth}
  \centerline{\includegraphics[width=\mydmwd]{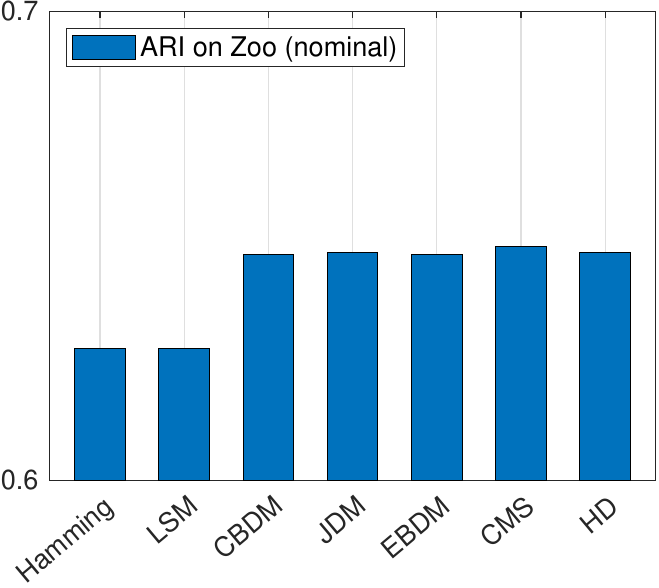}}
\end{minipage}
\hfill
\begin{minipage}{\mylwd\linewidth}
  \centerline{\includegraphics[width=\mydmwd]{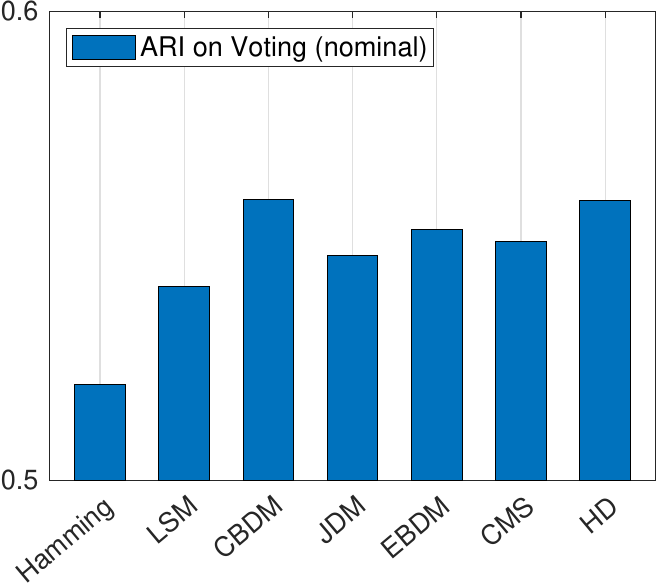}}
\end{minipage}
\hfill
\begin{minipage}{\mylwd\linewidth}
  \centerline{\includegraphics[width=\mydmwd]{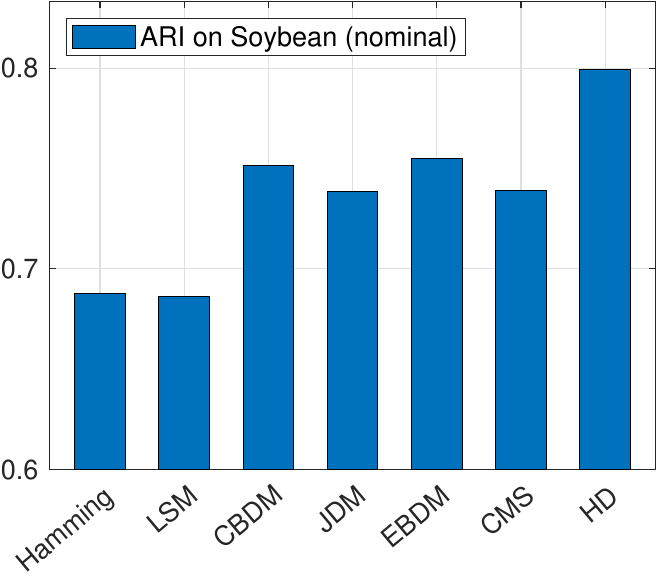}}
\end{minipage}
\caption{Clustering performance of various distance measures on mixed, ordinal, and nominal data sets, where a better measure yields a higher value.}
\label{fig:com_metric_ar}
\end{figure*}

Fig.~\ref{fig:com_metric_ar} compares clustering performance of different distance measures and illustrates that even not weighted by NDW, distance measured using HD is still very competent. More detailed observations are provided below:
\begin{itemize}
\item HD outperforms the other counterparts on mixed data sets because it is the only one that can measure intra-attribute distances of nominal and ordinal attributes in a homogeneous way. HD outperforms the other counterparts on ordinal data sets because it preserves the order relationship among ordered possible values.
\item On Zoo, Voting, Cancer, and Lym data sets, performance of HD is competitive but cannot be obviously better than the others. This may be because that the above-mentioned four data sets are composed of more nominal attributes, which weakens the advantages of HD accordingly.
\item ARI performance of CBDM is exactly 0 on Lenses, Nursery, and Car data sets because these data sets are composed of independent attributes and CBDM fails in measuring distances for such data sets.
\end{itemize}

\subsection{Effectiveness Evaluation of NDW}\label{subsct:ex_weighting}

\begin{figure}[t]
\newcommand{\mylwd}{0.325}
\newcommand{\mydmwd}{1.1in}
\begin{minipage}{\mylwd\linewidth}
  \centerline{\includegraphics[width=\mydmwd]{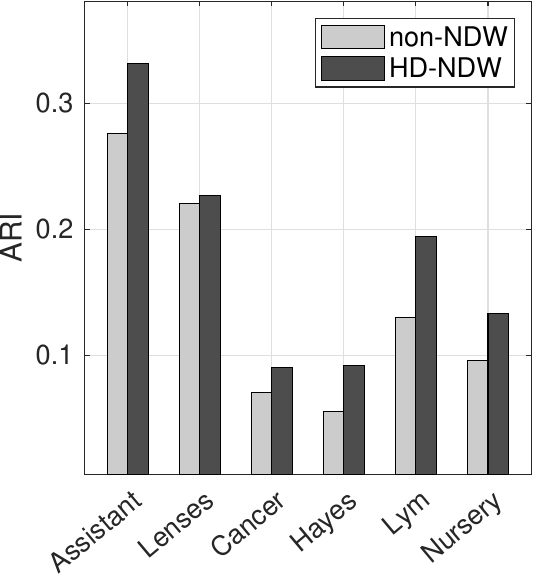}}
\end{minipage}
\hfill
\begin{minipage}{\mylwd\linewidth}
  \centerline{\includegraphics[width=\mydmwd]{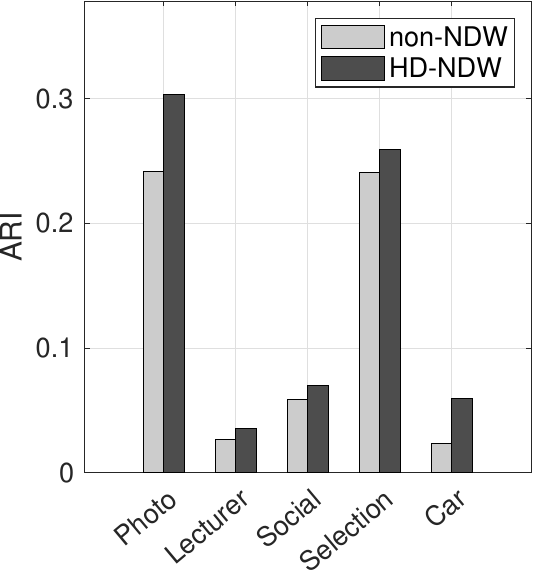}}
\end{minipage}
\hfill
\begin{minipage}{\mylwd\linewidth}
  \centerline{\includegraphics[width=\mydmwd]{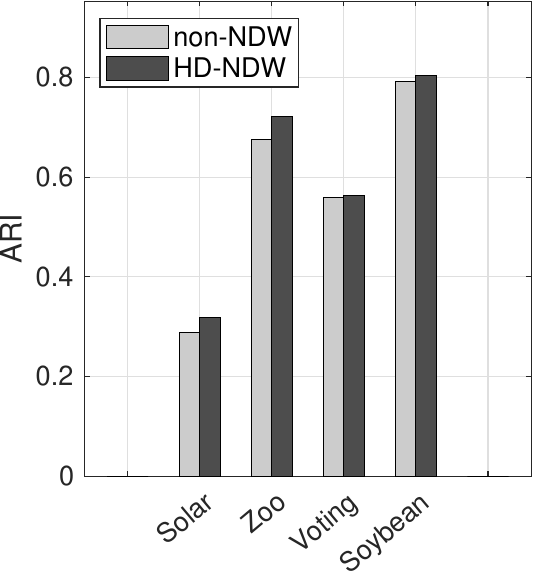}}
\end{minipage}
\caption{Clustering performance of HD-NDW and its version without NDW (non-NDW for short) on mixed, ordinal, and nominal data sets. A higher value indicates a better clustering performance.}
\label{fig:com_weight_ar}
\end{figure}

Clustering performance of the original version of HD-NDW and the version without NDW (abbreviated as non-NDW) is demonstrated in Fig.~\ref{fig:com_weight_ar}. By comparing them, effectiveness of the NDW mechanism can be empirically proved.

It can be observed that HD-NDW performs better than the non-NDW version on all the data sets, which indicates that the NDW mechanism does optimize the distance weights during the clustering of HD-NDW to obtain better clustering results. It can also be observed that HD-NDW does not outperform non-NDW a lot on Lenses, Voting and Soybean data sets. This may be because most attributes of these three data sets have only two possible values, and for such attributes, there is only one intra-attribute distance to be weighted during clustering, which makes NDW degrades into a conventional attribute weighting mechanism, and thus obscures the merits of NDW.

\subsection{Convergence Evaluation}\label{subsct:ex_convergence}

\begin{figure*}[t]
\newcommand{\mylwd}{0.19}
\newcommand{\mydmwd}{1.25in}
\begin{minipage}{\mylwd\linewidth}
  \centerline{\includegraphics[width=\mydmwd]{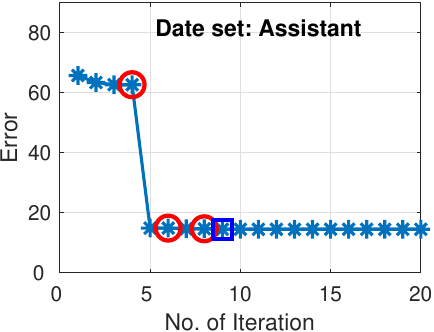}}
\end{minipage}
\hfill
\begin{minipage}{\mylwd\linewidth}
  \centerline{\includegraphics[width=\mydmwd]{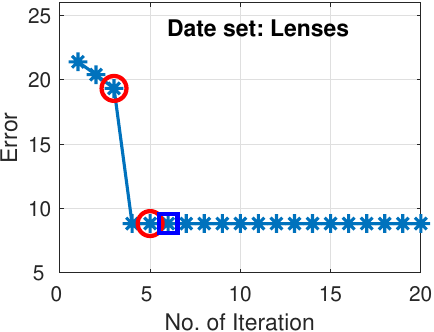}}
\end{minipage}
\hfill
\begin{minipage}{\mylwd\linewidth}
  \centerline{\includegraphics[width=\mydmwd]{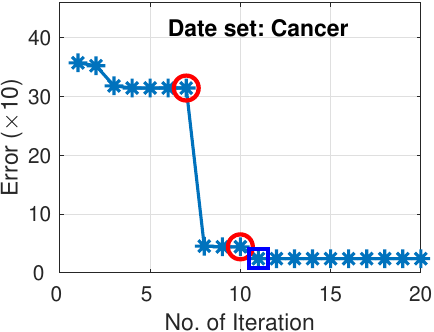}}
\end{minipage}
\hfill
\begin{minipage}{\mylwd\linewidth}
  \centerline{\includegraphics[width=\mydmwd]{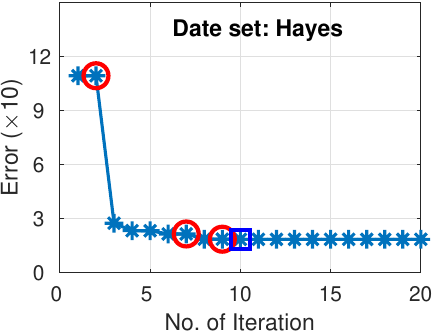}}
\end{minipage}
\hfill
\begin{minipage}{\mylwd\linewidth}
  \centerline{\includegraphics[width=\mydmwd]{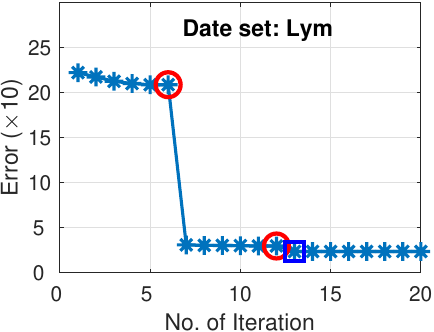}}
\end{minipage}
\vfill
\begin{minipage}{\mylwd\linewidth}
  \centerline{\includegraphics[width=\mydmwd]{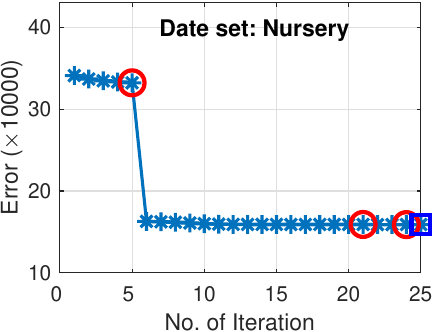}}
\end{minipage}
\hfill
\begin{minipage}{\mylwd\linewidth}
  \centerline{\includegraphics[width=\mydmwd]{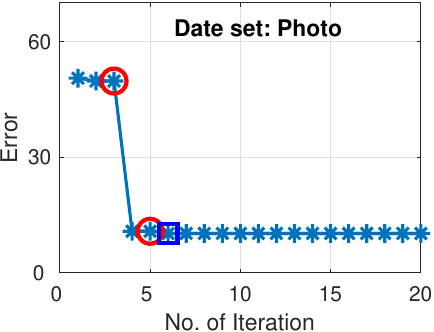}}
\end{minipage}
\hfill
\begin{minipage}{\mylwd\linewidth}
  \centerline{\includegraphics[width=\mydmwd]{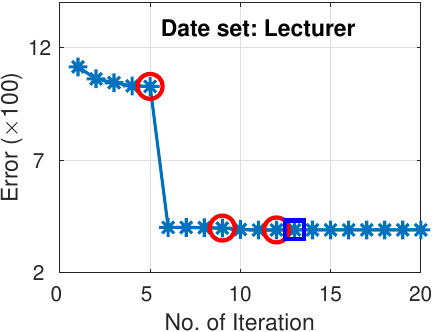}}
\end{minipage}
\hfill
\begin{minipage}{\mylwd\linewidth}
  \centerline{\includegraphics[width=\mydmwd]{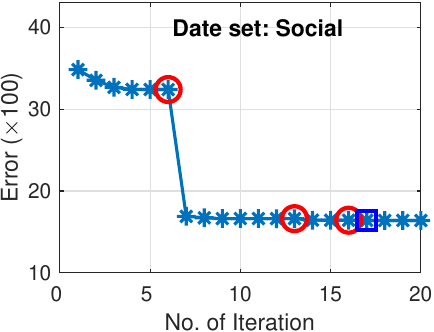}}
\end{minipage}
\hfill
\begin{minipage}{\mylwd\linewidth}
  \centerline{\includegraphics[width=\mydmwd]{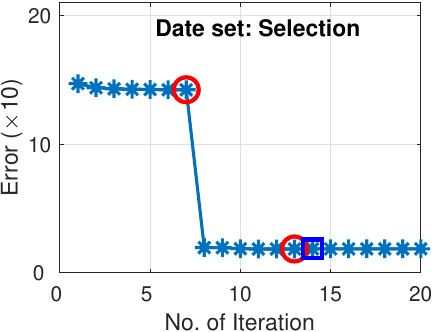}}
\end{minipage}
\vfill
\begin{minipage}{\mylwd\linewidth}
  \centerline{\includegraphics[width=\mydmwd]{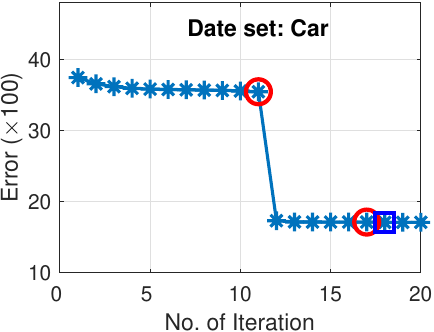}}
\end{minipage}
\hfill
\begin{minipage}{\mylwd\linewidth}
  \centerline{\includegraphics[width=\mydmwd]{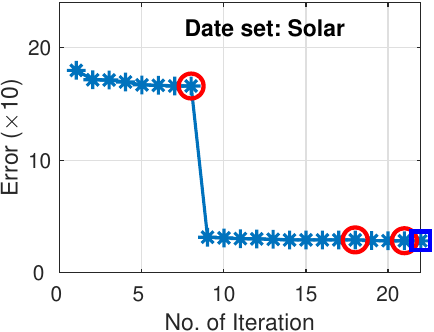}}
\end{minipage}
\hfill
\begin{minipage}{\mylwd\linewidth}
  \centerline{\includegraphics[width=\mydmwd]{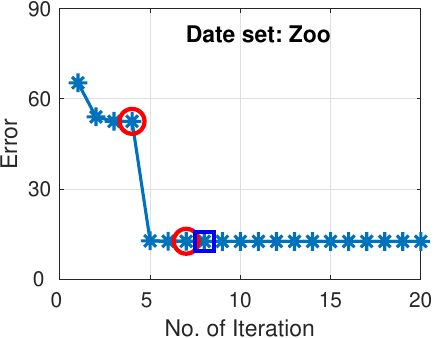}}
\end{minipage}
\hfill
\begin{minipage}{\mylwd\linewidth}
  \centerline{\includegraphics[width=\mydmwd]{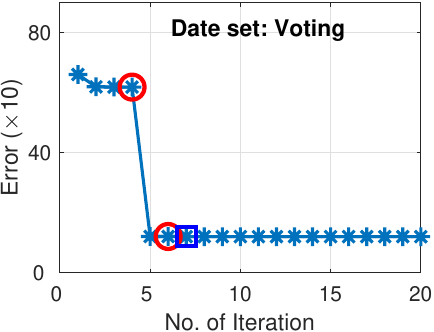}}
\end{minipage}
\hfill
\begin{minipage}{\mylwd\linewidth}
  \centerline{\includegraphics[width=\mydmwd]{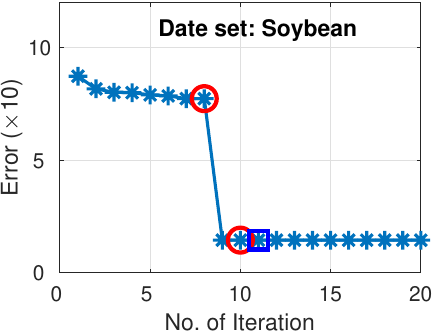}}
\end{minipage}
\caption{Convergence curves of HD-NDW on mixed, ordinal, and nominal data sets. The circles indicate the moments that Step 2 of Algorithm~\ref{alg:algorithm} is triggered, and the boxes indicate the moments of convergence of Algorithm~\ref{alg:algorithm}.}
\label{fig:conv}
\end{figure*}

We plot the convergence curves of HD-NDW on each data set in Fig.~\ref{fig:conv}. Specifically, after each iteration of \textbf{Step 1} in Algorithm~\ref{alg:algorithm}, `No. of Iteration' is added by 1, and the current `Error' (i.e., the current value of objective function) is plotted. When \textbf{Step 1} converges and \textbf{Step 2} is triggered, the current `Error' is marked by a circle. When the whole algorithm converges, the current `Error' is marked by a box.

It can be seen that HD-NDW converges within 6 - 22 iterations on different data sets, which is very fast for learning a large number (i.e. $\sum_r^dv^r(v^r-1)/2$) of intra-attribute distance weights. Moreover, the convergence curves are monotonically decreasing, and `Error' decreases sharply after updating the distance weights, which clearly illustrates the effectiveness of HD-NDW.

In our experiments, since the true number $k^*$ of the clusters is utilized, partition learned by \textbf{Step 1} is relatively reasonable, which offers useful information for learning $W$ in \textbf{Step 2}. This would be the reason why \textbf{Step 2} is always triggered 2 - 3 times for different data sets.

\subsection{Computational Efficiency Evaluation}\label{subsct:ex_efficiency}

\begin{figure}[t]
\newcommand{\mylwd}{0.48}
\newcommand{\mydmwd}{1.5in}
\begin{minipage}{\mylwd\linewidth}
  \centerline{\includegraphics[width=\mydmwd]{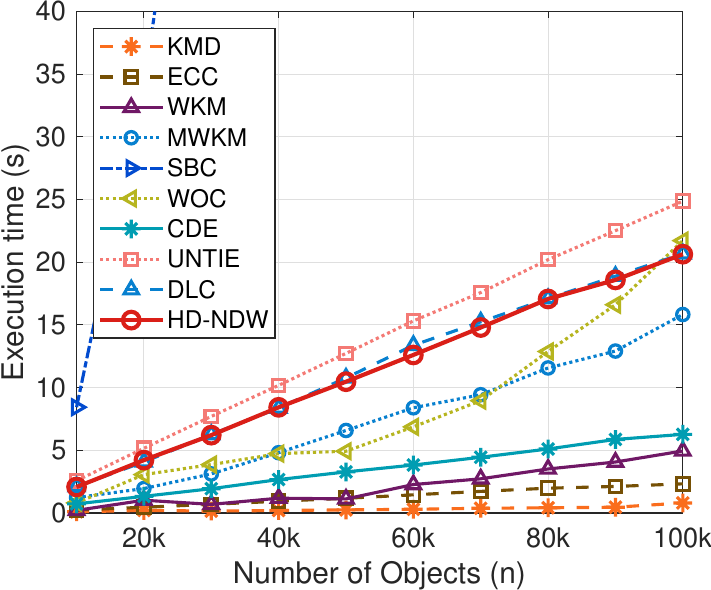}}
\end{minipage}
\hfill
\begin{minipage}{\mylwd\linewidth}
  \centerline{\includegraphics[width=\mydmwd]{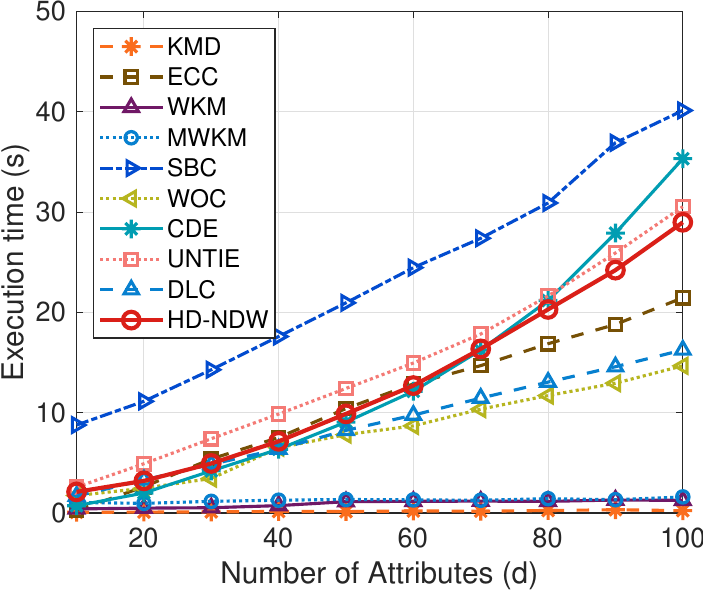}}
\end{minipage}
\vfill
\begin{minipage}{\mylwd\linewidth}
  \centerline{\includegraphics[width=\mydmwd]{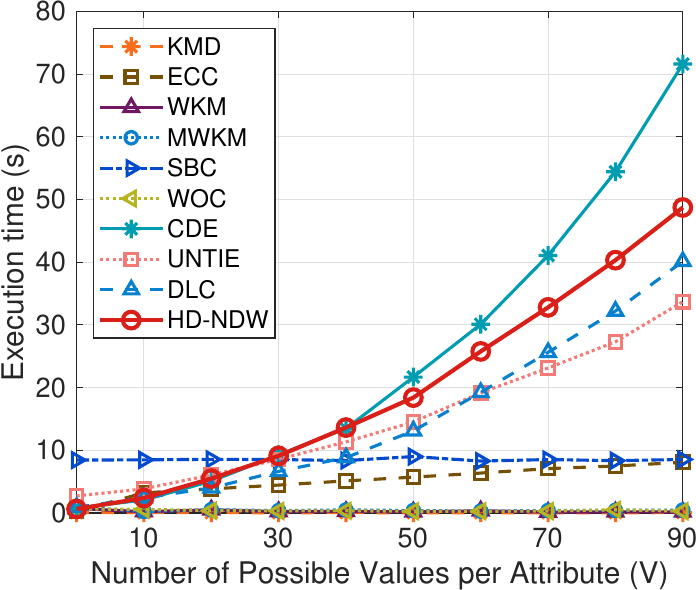}}
\end{minipage}
\hfill
\begin{minipage}{\mylwd\linewidth}
  \centerline{\includegraphics[width=\mydmwd]{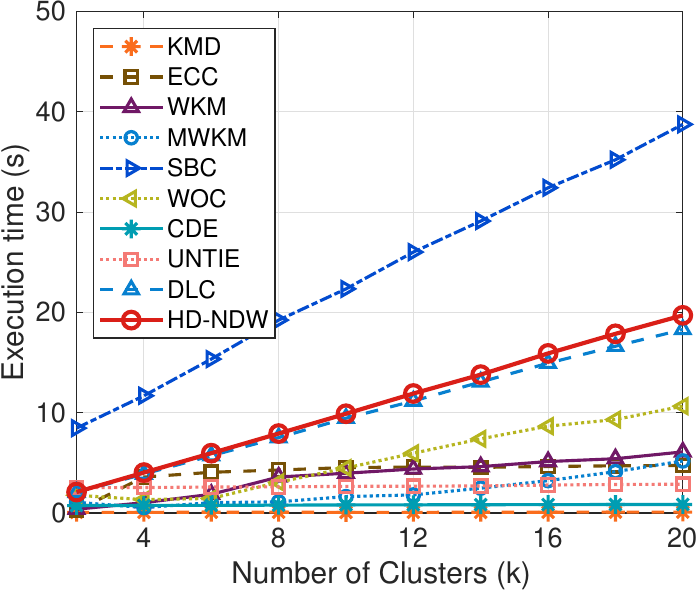}}
\end{minipage}
\caption{Execution time of various clustering algorithms w.r.t. number of objects ($n$), number of attributes ($d$), number of possible values per attribute ($V$), and number of clusters ($k$).}
\label{fig:com_time}
\end{figure}

We randomly generate synthetic categorical data sets to evaluate the computational efficiency of different clustering methods in terms of four data factors: (1) number of data objects ($n$), (2) number of attributes ($d$), (3) number of possible values per attribute ($V$), and number of clusters ($k$). Synthetic data sets are generated by increasing the value of one factor and fixing the other three factors at the default values. The default values are set at $n=10k$, $d=10$, $V=3$, and $k=2$. The value ranges for increasing each factor are set at $n=\{10k, 20k, ..., 100k\}$, $d=\{10, 20, ..., 100\}$, $V=\{3, 10, 20, ..., 90\}$, and $k=\{2, 4, ..., 20\}$. As HD-NDW is proposed for mixed categorical data clustering, we let it treat each generated data set as comprising $d/2$ nominal and $d/2$ ordinal attributes in this experiment. Since the data representation learning of SBC, CDE, UNTIE, and the distance computation of HD-NDW are necessarily processed for clustering, their execution time is counted in for comparison. We plot the execution time of different methods in terms of the four data factors in Fig.~\ref{fig:com_time}. It can be observed that the computation cost of HD-NDW has approximately linear relation with $n$ and $k$, which are consistent with the time complexity analysis in Section~\ref{subsct:dist} and~\ref{subsct:algorithm}.

In comparison with the state-of-the-art methods (i.e., UNTIE and DLC), it can be observed that the trends and values of the computation cost of HD-NDW and DLC are almost the same in terms of $n$ and $k$. Furthermore, HD-NDW has lower computation cost than UNTIE in terms of $n$. The computation cost of HD-NDW and DLC has higher increasing rate than UNTIE over $k$, because HD-NDW and DLC connect the distance learning with the target clustering task, and thus have better clustering performance in general as shown in Table~\ref{tb:comalgmixord}. Since $k$ is usually a very small value from the practical point of view, $k$ will not have a big impact on the efficiency of HD-NDW. Moreover, although the computation cost of HD-NDW has higher increasing rate over $d$ and $V$, the computations (e.g., the computation of each value in $D$, and the computation of each value in $W$) that are related to these two factors are independent and can be easily parallelized for acceleration.

In summary, HD-NDW does not bring much extra computation cost in comparison with the state-of-the-art methods, and its computation cost has a linear relation with $n$, which is generally the most concerned factor in terms of the computational efficiency of a clustering method.

\section{Conclusion}\label{sct:conclusion}

In this paper, we have proposed HD intra-attribute distance definition and NDW distance weighting mechanism, both of which are utilized to present HD-NDW clustering algorithm for data clustering with nominal and ordinal attributes. HD is formed based on the intrinsic connection of ordinal and nominal attributes, and can therefore define their intra-attribute distances in a homogeneous way. In the clustering process of HD-NDW, NDW novelly quantifies and iteratively updates the weights of intra-attribute distances defined by HD according to the present data partition, thereby ensuring an effective learning of the importance of intra-attribute distances for searching optimal clustering results. It turns out that HD-NDW is capable of clustering categorical data composed of any combination of nominal and ordinal attributes. Extensive experimental results have demonstrated that HD-NDW always converges quickly and has superior clustering performance in comparison with the existing counterparts.


\bibliographystyle{IEEEtran}
\bibliography{TPAMI_Mix_Clustering_Zyq}



\end{document}